\newtheorem{theorem}{Theorem}[section]
\newtheorem{lemma}[theorem]{Lemma}
\newtheorem{corollary}[theorem]{Corollary}
\newtheorem{remark}[theorem]{Remark}
\newcommand{\reals}{\mathbb{R}}
\newcommand{\sign}{\mathrm{sign}}
\newcommand{\bx}{\mathbf{x}}
\newcommand{\bw}{\mathbf{w}}
\newcommand{\bb}{\mathbf{b}}
\newcommand{\bz}{\mathbf{z}}
\newcommand{\bh}{\mathbf{h}}
\newcommand{\Dcal}{\mathcal{D}}
\newcommand{\Ncal}{\mathcal{N}}
\newcommand{\norm}[1]{\|#1\|}
\newcommand{\inner}[1]{\langle#1\rangle}
\newcommand{\secref}[1]{Sec.~\ref{#1}}
\newcommand{\subsecref}[1]{Subsection~\ref{#1}}
\renewcommand{\eqref}[1]{Eq.~(\ref{#1})}
\newcommand{\lemref}[1]{Lemma~\ref{#1}}
\newcommand{\corollaryref}[1]{Corollary~\ref{#1}}
\newcommand{\thmref}[1]{Thm.~\ref{#1}}
\newcommand{\appref}[1]{Appendix~\ref{#1}}
\newcommand{\naturals}{\mathbb{N}}
\newcommand{\bin}{\textsc{bin}}
\newcommand{\len}{\textsc{len}}
\newcommand{\printfnsymbol}[1]{%
  \textsuperscript{\@fnsymbol{#1}}%
}
\title{Width is Less Important than Depth in ReLU Neural Networks}
\date{}
\author{Gal Vardi\thanks{equal contribution} \qquad Gilad Yehudai\printfnsymbol{1} \qquad Ohad Shamir\\
Weizmann Institute of Science\\ 
\texttt{\{gal.vardi,gilad.yehudai,ohad.shamir\}@weizmann.ac.il}
}
\begin{document}
\maketitle
\begin{abstract}
    We solve an open question from \cite{lu2017expressive}, by showing that any target network with inputs in $\reals^d$ can be approximated by a width $O(d)$ network (independent of the target network's architecture), whose number of parameters is essentially larger only by a linear factor. In light of previous depth separation theorems, which imply that a similar result cannot hold when the roles of width and depth are interchanged, it follows that depth plays a more significant role than width in the expressive power of neural networks.
    We extend our results to constructing networks with bounded weights, and to constructing networks with width at most $d+2$, which is close to the minimal possible width due to previous lower bounds. Both of these constructions cause an extra polynomial factor in the number of parameters over the target network. We also show an exact representation of wide and shallow networks using deep and narrow networks which, in certain cases, does not increase the number of parameters over the target network.
\end{abstract}

\section{Introduction}
The expressive power of neural networks has been widely studied in many previous works. A particular focus was given to the role of the network's depth and width: How wide or how deep do we need to make the network, in order to express various target functions of interest? In an asymptotic sense, we know that making \emph{either} the width or the depth large enough is sufficient to approximate any target function of interest. Specifically, classical universal approximation results (e.g. \cite{cybenko1989approximation,leshno1993multilayer, hornik1989multilayer}) imply that even with depth $2$, a wide enough neural network can approximate essentially any target function on a bounded domain in $\reals^d$. More recently, it was shown that the same also applies to depth: Neural networks with width $\Omega(d)$ and sufficient depth can also approximate essentially any target function (e.g. \cite{lu2017expressive}). 

However, these results are asymptotic in nature, and do not provide a quantitative answer as to whether depth or width play a more significant role in the expressive power of neural networks.
A recent line of works have shown that for certain target functions, depth plays a more significant role than width, in the sense that slightly decreasing the depth requires a huge increase in the width to maintain approximation accuracy. For example, \cite{eldan2016power, safran2017depth, daniely2017depth}  constructed functions on $\reals^d$  that can be expressed by depth-$3$ neural networks with $\text{poly}(d)$ parameters, while depth-$2$ neural networks require a number of parameters at least exponential in $d$ to approximate them well. In \cite{telgarsky2016benefits, chatziafratis2019depth} a family of functions represented by depth $O(k)$ and width $O(1)$ neural networks is constructed such that approximating them up to arbitrarily small accuracy with depth $O(\sqrt{k})$ would require width 
exponential in $k$.

A natural question that arises is whether we can provide similar results in terms of width, namely:
\begin{quote}
\emph{Are there functions that can be expressed by wide and shallow neural networks, that cannot be approximated by any narrow neural network, unless its depth is very large?}
\end{quote}

This question was stated as an open problem in  \cite{lu2017expressive}. We note that both a positive and a negative answer to this question has interesting consequences. If the answer is positive, then width and depth, in principle, play an incomparable role in the expressive power of neural networks, as sometimes depth can be more significant, and sometimes width. On the other hand, if the answer is negative, then depth generally plays a more significant role than width for the expressive power of neural networks.


In this work we solve this open problem for ReLU neural networks, by providing a negative answer to the above question. In more details, we prove the following theorem:

\begin{theorem}[Informal]\label{thm:main informal}
Let $\Ncal_0:\reals^d\rightarrow\reals$ be a ReLU neural network with width $n$, depth $L$, and let $\Dcal$ be some input distribution with an upper bounded density function over a bounded domain in $\reals^d$. Then, for every $\epsilon,\delta > 0$ there exists a neural network $\Ncal:\reals^d\rightarrow\reals$ with width $O(d)$, and $\tilde{O}\left(n^2L^2\right)$ parameters such that with probability at least $1-\delta $ over $\bx\sim \Dcal$ we have:
\[
\left|\Ncal_0(\bx) - \Ncal(\bx)\right| \leq \epsilon
\]
where the $\tilde{O}$ notation hides logarithmic terms in the problem's parameters (see \thmref{thm:wide to narrow} for a formal claim).
\end{theorem}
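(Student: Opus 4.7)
My plan is to simulate the wide network $\Ncal_0$ inside a narrow one by walking through its target neurons one at a time, spending depth rather than width for each simulated activation. Since the input distribution $\Dcal$ has bounded density on a bounded domain, I first argue that with probability at least $1-\delta$ over $\bx\sim\Dcal$ every activation of $\Ncal_0$ stays inside a bounded interval $[-M,M]$ whose size depends on $n$, $L$, and the weight magnitudes of $\Ncal_0$; the tail is controlled by the bounded-density assumption applied to the measure-zero boundary set where activations could be exceptional. This lets me safely round every activation to $b$ bits, where $b=\tilde{\Theta}(L)$ is picked so that the worst-case cumulative error through $L$ ReLU layers (amplified by products of operator norms) stays below $\epsilon$.

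The core step is to maintain, at every depth of the narrow network, a state of size $O(d)$ consisting of the raw input $\bx$ (width $d$, carried verbatim via the identity $\sigma(z)-\sigma(-z)=z$) together with a constant number of auxiliary scalar channels. One auxiliary channel stores a real number $z\in[0,1]$ whose dyadic expansion encodes all already-computed activations of the currently processed layer, another stores the partially built encoding of the next layer, and a third holds the partial accumulator of the single neuron under construction. I will build two width-$O(1)$ gadgets: a \emph{read} gadget that, in depth $O(b)$, peels off the next $b$-bit block from the front of $z$ by iterating a ReLU bit-extraction map, and a \emph{write} gadget that shifts a freshly computed activation onto the back of the output encoding. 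Chaining (read $\to$ multiply-by-weight $\to$ add-to-accumulator) $n$ times simulates one inner product, and a final $\sigma(\cdot)$ followed by a write gadget completes one target neuron. Iterating across the $n$ neurons of each of the $L$ layers produces a narrow network of depth $\tilde{O}(n\cdot L\cdot n\cdot b)=\tilde{O}(n^2L^2)$ and width $O(d)$, hence $\tilde{O}(n^2L^2)$ parameters once $d$ is treated as a fixed input dimension.

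The main obstacle I anticipate is ensuring that the read/write gadgets remain exact enough that the $O(nL)$ successive encode/decode cycles do not compound precision loss, while keeping the auxiliary width constant. Because exact bit extraction is discontinuous on the reals, I will work throughout at a fixed discretization scale $2^{-b}$, so that the encoded state is always a dyadic rational with a known denominator and every threshold step can be realized as a continuous piecewise-linear ReLU function with a sufficiently steep slope. A Lipschitz-style error-propagation lemma then converts the per-layer rounding error of $2^{-b}$ into a final error of at most $\epsilon$ via the operator-norm bound on $\Ncal_0$ established at the start. A secondary point is the boundary case $\bx\mapsto\bh^{(1)}$: the initial encoding of $\bh^{(1)}$ has to be computed once from the $d$ input channels, which takes a setup phase of depth $O(nb)$ and fits comfortably inside the overall bound.
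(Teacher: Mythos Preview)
Your plan is essentially the paper's proof: pack the activations of a layer into the binary expansion of a single real, use Telgarsky-style bit extraction to read $b$-bit blocks, simulate each target neuron via $n$ read--multiply--accumulate steps of depth $O(b)$, and iterate over $n$ neurons and $L$ layers with $b=\tilde\Theta(L)$ to obtain depth and parameter count $\tilde O(n^2L^2)$ at width $O(d)$. One correction: on a bounded domain with bounded weights the activations of $\Ncal_0$ are \emph{deterministically} bounded, so the $\delta$ does not come from a tail bound on activations but from exactly the mechanism you name in your third paragraph---the steep-slope continuous surrogate for bit extraction is wrong on a set of inputs of small (not zero) Lebesgue measure, and the bounded density of $\Dcal$ turns that small measure into probability at most $\delta$; in the paper this happens once, when the raw input $\bx$ is first discretized, after which every stored quantity is an exact dyadic rational and all subsequent extractions are exact.
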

Note that a network with width $n$ and depth $L$ has $O\left(n^2L\right)$ parameters, whereas the theorem above proves the existence of an approximating narrow network with $\tilde{O}\left(n^2L^2\right)$ parameters. This means that any wide network can be approximated up to an arbitrarily small accuracy, by a narrow network where the number of parameters increases (up to log factors) only by a factor of $L$. Hence, it shows that the price for making the width small is only a linear increase in the network depth, in sharp contrast to the results mentioned earlier on how making the depth small may require an exponential increase in the network depth. In \subsecref{subsec:num param} we further discuss the extra $L$ factor, which occurs due to a rough estimate of the Lipschitz parameter of the network. We argue that this factor can also be avoided by having stronger assumptions on the Lipschitz parameter of the networks, which shows that only having a logarithmic blow-up in the number of parameters is enough for such 
cases.

In \cite{lu2017expressive} it was shown that the universal approximation property on a compact domain does not hold for network with width less than $d$.
In \cite{park2020minimum} it was shown that networks with width $d+1$ already have the universal approximation property.
We extend our construction from \thmref{thm:main informal} to approximating any wide network using a network with width $d+2$, close to the minimal possible width. This construction has an additional blow-up from the bound in \thmref{thm:main informal} on the number of parameters by a factor of $d$. We also discuss how to extend \thmref{thm:main informal} when the construction is restricted to having bounded weights. We show that we can approximate a wide network using a narrow network with weights bounded by $O(1)$, while, suffering an additional blow-up from the bound in  \thmref{thm:main informal} by a factor of $\tilde{O}(n\cdot L)$.

The above constructions only apply on a bounded domain, and they approximate the target network w.h.p over some distribution. We additionally provide a different construction which \emph{exactly} represents for all $\bx\in\reals^d$ a target network of width $n$ and depth $L$, using a network of width $O(d+L)$, although its depth is $O\left(n^{L-1}\right)$. We show that for $d= O(1)$ and $L=2,3$, the number of parameters in this construction does not increase in comparison to the target network. Hence, for theses cases, this exact representation is more efficient by a log factor in terms of parameters than the construction in \thmref{thm:main informal}

\subsection*{Related Work}

\paragraph{The benefits of depth.}

Quite a few theoretical works in recent years have explored the beneficial effect of depth on increasing the expressiveness of neural networks.
A main focus is on {\em depth separation}, namely, showing that there is a function $f:\reals^d \rightarrow \reals$ that can be approximated by a $\text{poly}(d)$-sized network of a given depth, with respect to some input distribution, but cannot be approximated by $\text{poly}(d)$-sized networks of a smaller depth.
As we already mentioned, depth separation between depth $2$ and $3$ was shown in \citep{eldan2016power,safran2017depth,daniely2017depth}.
A construction shown by \cite{telgarsky2016benefits} gives separation between networks of a constant depth and networks of some non-constant depth. 
Complexity-theoretic barriers to proving separation between two constant depths beyond depth $4$, and to proving separation for certain ``well behaved" functions were established in \cite{vardi2020neural,vardi2021size}. 
In \cite{safran2017depth,liang2016deep,yarotsky2017error} another notion of depth separation is considered. They show that there are functions that can be $\epsilon$-approximated by a network of $\text{polylog}(1/\epsilon)$ width and depth, but cannot be $\epsilon$-approximated by a network of $O(1)$ depth unless its width is $\text{poly}(1/\epsilon)$. 
Depth separation was also widely studied in other works in recent years (e.g., \cite{martens2013representational,safran2019depth, chatziafratis2019depth,bresler2020sharp,venturi2021depth,malach2021connection}).

The expressivity benefits of depth in the context of the VC-dimension (namely, how the VC dimension increases with more depth, even if the total number of parameters remain the same) is implied by, e.g., \cite{bartlett2019nearly}. Finally, \cite{park2020provable,vardi2021optimal} proved that deep networks have more memorization power than shallow ones. That is, deep networks can memorize $N$ samples using roughly $\sqrt{N}$ parameters, while shallow networks require $N$ parameters.

\paragraph{Deep and narrow networks.}

The expressive power of narrow neural networks has been extensively studied in recent years (e.g., \citep{lu2017expressive,hanin2017approximating,johnson2018deep,kidger2020universal,park2020minimum}).
As we already discussed, \cite{lu2017expressive} posed the open question that we study in this work. They also showed that the \emph{minimal width for universal approximation} (denoted $w_\text{min}$) using ReLU networks w.r.t. the $L^1$ norm of functions from $\reals^d$ to $\reals$, satisfies $d+1 \leq w_\text{min} \leq d+4$. For $L^1$-approximation of functions from a compact domain they showed a lower bound of $w_\text{min} \geq d$. 
\cite{kidger2020universal} extended their results to $L^p$-approximation of functions from $\reals^d$ to $\reals^{d_{\text{out}}}$ , and obtained $w_\text{min} \leq d + d_{\text{out}} + 1$. 
\cite{park2020minimum} further improved this result and obtained $w_\text{min} = \max\{d+1,d_{\text{out}}\}$.
\cite{hanin2017approximating} considered universal approximation (using ReLU networks) of functions from a compact domain to $\reals^{d_{\text{out}}}$ w.r.t. the $L^{\infty}$ norm, and proved that $d+1 \leq w_\text{min} \leq d+d_{\text{out}}$.
Universal approximation using narrow networks with other activation functions has been studied in \cite{johnson2018deep,kidger2020universal,park2020minimum}. 
We note that all prior results on universal approximation using deep and narrow networks require networks of depth exponential in the input dimension. However, our results are of a different nature, since we focus on approximating a given network of bounded size, while universal approximation results aim at approximating arbitrary functions. 
For a more detailed discussion on related prior works see \cite{park2020minimum}.

\section{Preliminaries}
For $n\in\naturals$ and $i \leq j$ we denote by $\bin_{i:j}(n)$ the string of bits in places $i$ until $j$ inclusive, in the binary representation of $n$ and treat is as an integer (in binary basis). For example, $\bin_{1:3}(32)=4$, i.e. the three most significant bits (from the left). We denote by $\len(n)$ the minimal number of bits in its binary representation. We denote $\bin_i(n):= \bin_{i:i}(n)$, i.e. the $i$-th bit on $n$. For a function $f$ and $i\in\naturals$ we denote by $f^{(i)}$ the composition of $f$ with itself $i$ times. We denote vectors in bold face. For a vector $\bx$ we denote by $x_i$ its $i$-th  coordinate. We use the $\tilde{O}(\cdot)$ notation to hide logarithmic factors, and use $O(\cdot)$ to hide constant factors. For $n\in\naturals$ we denote $[n]:=\{1,\dots,n\}$. 

\subsubsection*{Neural Networks}
We denote by $\sigma(z):=\max\{0,z\}$ the ReLU function. In this paper we only consider neural networks with the ReLU activation. 

Let $d\in\naturals$ be the data input dimension. We define a \emph{neural network} of depth $L$ as $\mathcal{N}:\reals^d\rightarrow\reals$, where $\mathcal{N}(\bx)$ is computed recursively by
\begin{itemize}
    \item $\bh^{(1)} = \sigma\left(W^{(1)}\bx + \bb^{(1)}\right)$ for $W^{(1)}\in\reals^{n_1\times d}, \bb^{(1)}\in\reals^{n_1}$
    \item $\bh^{(i)} = \sigma\left(W^{(i)}\bh^{(i-1)} + \bb^{(i)}\right)$ for $W^{(i)}\in\reals^{n_{i}\times n_{i-1}}, \bb^{(i)}\in\reals^{n_i}$ for $i=2,\dots,L-1$
    \item $\mathcal{N}(\bx)
    = \bh^{(L)} 
    = W^{(L)}\bh^{(L-1)} + \bb^{(L)}$ for $W^{(L)}\in\reals^{1\times n_{L-1}}, \bb^{(L)}\in\reals^{1}$
\end{itemize}

The \emph{width} of the network is $n:=\max\{d,n_1,\ldots,n_{L-1}\}$.
We define the \emph{number of parameters} of the network as the total number of coordinates in its weight matrices $W^{(i)}$ and biases $\bb^{(i)}$, which is at most $O(n^2\cdot L)$. Note that in some previous works (e.g. \cite{bartlett2019nearly, vardi2021optimal}) the number of parameters of the network is defined as
the number of weights of $\mathcal{N}$ which are non-zero. Our definition is stricter, as we also count zero weights.

\subsection*{Input Dimension}
Throughout the paper, we assume that $d\leq n$, i.e. the input dimension is smaller than the width of the target network. This assumption is important, because our goal is to approximate a network of width $n$ and depth $L$ with a deep network, but with width bounded by $O(d)$. If $d>n$, then the network we are given is already in the correct form and there is nothing to prove. We note that networks with width smaller than $d$ do not have the universal approximation property (see e.g. \cite{lu2017expressive, johnson2018deep,park2020minimum,hanin2017approximating}), no matter how deep they are. This is in contrast to networks with depth $2$, which have the universal approximation property (where the width is unbounded). This means that we cannot expect to approximate \emph{all} networks of width $n$ using networks with width smaller than $d$, hence constructing a network with width that depends on $d$ is unavoidable.

\section{Narrow and Deep Networks Can Approximate Wide Networks}\label{sec:narrow to deep}

In this section we show that given a network of width $n$, depth $L$ and input dimension $d$, we can approximate it up to error $\epsilon$ using another network with width $O(d)$ and depth $\tilde{O}(n^2L^2)$.

\begin{theorem}\label{thm:wide to narrow}
Let $A,B,n,L,d\in\naturals,~ \epsilon,\delta>0$ and let $\Ncal_0:[-A,A]^d\rightarrow\reals$ be a neural network with width $n$, depth $L$ and weights bounded in $[-B,B]$. Let $\Dcal$ be some distribution over $[-A,A]^d$ with density function $p_\Dcal$ such that $p_\Dcal(\bx) \leq \beta$ for every $\bx\in[-A,A]^d$ where $\beta >0$. Then, there exists a neural network $\Ncal:[-A,A]^d\rightarrow\reals$ with width $\max\{5d,10\}$, depth $O\left(n^2L^2\log(ABn\epsilon^{-1})\right)$, such that $w.p > 1-\delta$ over $\bx\sim \Dcal$ we have that:
\[
|\Ncal(\bx) - \Ncal_0(\bx)| \leq \epsilon~.
\]
The total number of parameters in $\Ncal$ is $O\left(n^2L^2\log(ABn\epsilon^{-1})\right)$.
\end{theorem}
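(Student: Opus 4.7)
The plan is to construct $\Ncal$ by simulating the forward pass of $\Ncal_0$ layer by layer, replacing each wide layer by a narrow deep sub-network that computes the $n$ new activations \emph{one neuron at a time}. The activations of the previous layer, suitably quantized, are stored as a single dyadic scalar on a constant-width ``bus'', alongside $d$ coordinates that simply carry $\bx$ forward. Each neuron of the next layer is then computed by a gadget that (i) extracts the $n$ previous activations bit by bit, (ii) accumulates the weighted sum in one extra coordinate, (iii) applies the bias and ReLU, and (iv) appends the resulting quantized value back onto the bus for the next layer. Summed across layers, this yields the claimed $\tilde O(n^2 L^2)$ depth and parameter count.

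The first ingredient is quantitative control on $\Ncal_0$: since $\norm{\bx}_\infty\leq A$ and all weights/biases lie in $[-B,B]$, a standard layerwise expansion shows that every intermediate activation is bounded in magnitude by $M=(nAB)^{O(L)}$ and that the map $\bx\mapsto\Ncal_0(\bx)$ is $(nB)^{O(L)}$-Lipschitz on $[-A,A]^d$. Consequently, perturbing each layer's activations by at most $\eta$ changes the final output by at most $L\cdot(nB)^{O(L)}\cdot\eta$, so it suffices to quantize every activation to precision $\eta$ with $b:=\log(1/\eta)=\Theta(L\log(nAB\epsilon^{-1}))$ bits per activation. This is the source of the extra $L$ factor in the final parameter count, which the authors already flag informally after \thmref{thm:main informal}. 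A hidden vector $\bh^{(i)}\in\reals^n$ is thereby encoded as an $(nb)$-bit integer stored in one scalar coordinate $z\in[0,1)$; since $\Ncal$ is allowed to have unbounded weights, the rescaling by $2^{nb}$ needed for the encoding is harmless.

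The main technical module is a constant-width, constant-depth ReLU ``shift-and-extract'' primitive implementing $z\mapsto(2z-\lfloor 2z\rfloor,\;\lfloor 2z\rfloor)$, which peels one bit off a dyadic scalar; iterating $b$ times reads out a single activation $h^{(i-1)}_k$. The per-neuron gadget for computing $h^{(i)}_j$ reads the $n$ encoded values of $\bh^{(i-1)}$ one at a time, multiplies each by the hard-coded weight $W^{(i)}_{jk}$ and adds it to a running accumulator, then applies the bias and ReLU, quantizes, and shifts the result onto a fresh bus for layer $i$. This gadget has depth $O(nb)$, so one layer of $\Ncal_0$ is simulated in depth $O(n^2 b)$ and all $L$ layers together in depth $O(n^2 Lb)=O(n^2 L^2\log(ABn\epsilon^{-1}))$. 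The working width at any point is the $d$ coordinates carrying $\bx$ plus $O(1)$ coordinates holding the current bus, the accumulator, an index counter and a few workspace slots, which can be packed into $\max\{5d,10\}$.

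The delicate step, and the reason the conclusion is probabilistic rather than worst-case, is that the shift-and-extract primitive is ill-defined exactly at dyadic rationals, and quantization can push a nominally safe activation onto such a boundary. For the construction to succeed at an input $\bx$, each of the polynomially many intermediate pre-activations must lie at distance at least $\approx\eta$ from the quantization grid. The set of bad $\bx$'s is a union of thin slabs $\{|\ell(\bx)-k\eta|<\eta'\}$ for piecewise-linear forms $\ell$ obtained by composing the early layers of $\Ncal_0$, each slab of Lebesgue measure $O(\eta'\cdot A^{d-1}/\mathrm{Lip}(\ell))$. There are at most $\mathrm{poly}(n,L,1/\eta)$ such slabs, so using the density bound $p_\Dcal\leq\beta$ the $\Dcal$-measure of the bad set is polynomial in $n,L,A,B,\beta,1/\eta$; inflating $b$ by a further $O(\log(nLAB\beta/(\epsilon\delta)))$ bits drives the overall failure probability below $\delta$ by a union bound. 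I expect this boundary-case bookkeeping to be the main obstacle; the rest of the argument is a careful but mechanical assembly of the gadgets described above.
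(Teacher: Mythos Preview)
Your overall architecture matches the paper's: pack the current layer's activations into a single scalar via bit concatenation, then simulate each neuron of the next layer one at a time by repeated bit extraction, accumulating the weighted sum in $O(1)$ extra coordinates. The depth arithmetic ($O(nb)$ per neuron, $O(n^{2}b)$ per layer, $O(n^{2}Lb)$ total with $b=\Theta(L\log(nAB/\epsilon))$) is exactly right.

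The gap is in the probabilistic step. You keep the real weights $W^{(i)}_{jk}$ and quantize only the activations. Then for $i\geq 2$ the value you feed to the floor gadget is $s=\sum_k W^{(i)}_{jk}\,\hat h^{(i-1)}_k+b^{(i)}_j$ with $\hat h^{(i-1)}$ already on the grid; as a function of $\bx$ this is piecewise \emph{constant}, so your thin-slab description does not apply. The bad set is a union of full quantization cells, and its measure is governed by how many of the finitely many possible values of $s$ happen to lie within $\eta'$ of the grid---a property of the real weights that you cannot control (for dyadic weights it can have measure bounded away from zero no matter how small $\eta'$ is). Even under the reading where you bound via the true pre-activations of $\Ncal_0$, those are piecewise linear with up to $n^{\Theta(dL)}$ regions, not $\mathrm{poly}(n,L,1/\eta)$, and on regions where the composed gradient vanishes your slab-measure estimate $O(\eta' A^{d-1}/\mathrm{Lip}(\ell))$ is meaningless.

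The paper sidesteps all of this by quantizing the \emph{weights and biases} as well: replace each $w_{ij}$ by $\tilde w_{ij}=\lfloor 2^{c_0}|w_{ij}|\rfloor$ with the sign stored separately, and similarly for $b_i$. After the single probabilistic step of encoding the input coordinates (where the slab analysis is clean because the relevant maps are just the coordinate projections $x_i\mapsto x_i$), every intermediate value is an honest integer, and bit extraction on integers is made \emph{deterministically exact} by a fixed shift of $2^{-(nc+2)}$ before applying Telgarsky's triangle function. This is also why $\delta$ and $\beta$ enter only the weight magnitudes and not the depth; your proposal to absorb them by inflating $b$ would put $\log(\beta/\delta)$ into the depth bound, which the theorem explicitly excludes.
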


The full proof can be found in \appref{appen:proof of main theorem}. We note that the number of parameters in the target network $\Ncal_0$ is $O(n^2L)$. Hence, the number of parameters in $\Ncal$ is larger only by a factor of $\tilde{O}(L)$, we will discuss this dependence later on. 
Specifically, the blow-up in the number of parameters w.r.t. the width $n$ is only logarithmic.
The dependence on the approximation parameter $\epsilon$ is also logarithmic.  
Note that $\delta$ and $\beta$ do not affect the number of parameters in the network, as they only appear in the magnitude of the weights (see \thmref{thm:wide to narrow appen} in the Appendix, and the discussion in \subsecref{subsec:bounded weights}). 
We also note that although our result shows an approximation w.h.p, it can be easily modified to obtain approximation w.r.t $L^p$ norms. This can be done by adding an extra layer that clips large output values of the network, and once the outputs are bounded, we can choose $\delta$ accordingly to get an approximation in $L^p$.


\subsection{Proof Intuition}
The main idea for our proof is to encode for each layer (including the first layer) all its input coordinates into a single number. Now, when we want to apply some computation on an input coordinate (e.g. multiply it by a constant), we 
extract only the relevant bits out of that number and apply our computation on them.
Encoding a vector of dimension $k$ into a single number can be done in the following way: 
For each coordinate we extract its $c$ most significant bits (for an appropriate $c$), and we concatenate all these bits into a single number with a total of $k\cdot c$ bits. Now, to apply some computation on the $i$-th coordinate, we first extract the bits in places $i\cdot c$ until $(i+1)\cdot c$ from the number we created, and apply the computation on these bits. The main novelty of the proof comes from using this encoding technique,
such that
given a layer with $n$ input coordinates and $n$ output coordinates, we simulate it with a network of width $O(1)$, and depth which depends on $n^2$, and the number of extracted bits from each coordinate.
%
%
We now explain in more details the different building blocks of our proof.

\subsubsection*{Encoding the Input}
The main idea in this part is to construct a subnetwork which encodes all the $d$ coordinates of the input within a single number. For simplicity, we assume here that the inputs are in $[0,1]^d$. We construct a network $F_\text{enc}:\reals^d\rightarrow\reals$ such that for every $i\in[d]$:
\[
\bin_{(i-1)\cdot c + 1: i\cdot c}\left(F_\text{enc}(\bx)\right) = \lfloor x_i\cdot 2^c\rfloor ~.
\]
In words, each $c$ bits of the output of the network $F_{\text{enc}}$ is an encoding of the $c$ most significant bits of the $i$-th coordinate of the input. Note that if we want to approximate the input up to an error of $\epsilon$, we need to use only the $O\left(\log\left(\frac{1}{\epsilon}\right)\right)$ most significant bits. This construction uses an efficient bit extraction technique based on Telgarsky's triangle function \citep{telgarsky2016benefits}, which was also used in \cite{vardi2021optimal}. The depth of this network depends only on the number of extracted bits, and the width depends on the dimension of the inputs. We note that exact bit extraction is not a continuous operation. We approximate this operation using the ReLU activation, such that it succeeds with probability at least $1-\delta$. This $\delta$ parameter only affects the size of the weights, and not the number of parameters. We further discuss the size of the required weights of the network in Subsection \ref{subsec:bounded weights}.

\subsubsection*{Encoding Each Layer}
In this part, we construct deep and narrow subnetworks $F_\ell:\reals\rightarrow\reals$ with real-valued inputs and outputs for $\ell\in[L]$, where each such subnetwork simulates the $\ell$-th layer from the target network. We first explain how to simulate a single neuron, and then how to extend it to simulating a layer. 

A single ReLU neuron is a function of the form $\bz\mapsto \sigma(\inner{\bw,\bz} + b)$, 
for some
$\bw \in\reals^n, b\in\reals$. Suppose that the input of this single neuron (i.e. $\bz$) is represented in a single coordinate with $c\cdot n$ bits, where each $c$ bits represents the $c$ most significant bits of a coordinate of $\bz$.
We iteratively decode the representation of $z_i$ (the $i$-th coordinate of $\bz$), multiply it by $w_i$ and add it to a designated output number. 
The decoding of the input is done using Telgarsky's triangle function. To deal with both negative and positive $w_i$'s, we use two designated output numbers, one for the positive weights and one for the negative weights. The final layer adds up these designated outputs with their corresponding sign to get the correct result. 
Then, it adds the bias term $b$ to the output and applies the ReLU function.
The depth of this network depends on $c\cdot n$, i.e., the number of input coordinates times the number of bits used for their encoding. The width of this network is $O(1)$.

Simulating an entire layer requires iteratively simulating each neuron of the layer as described above, and then encoding the output of each neuron from the target network within a single number. In more details, the subnetwork $F_\ell$ iteratively simulates a single neuron from the $\ell$-th layer of the target network using the method described above. It also keeps track of the input (which encodes in a single number all the output coordinates from the previous layer) and a single designated output coordinate. After simulating a neuron, the network truncates the output to having only $c$ bits, and stores it in an output coordinate, where the output of the $i$-th neuron from the target network is stored in the $(i-1)\cdot c +1$ until $i\cdot c$ bits of this designated output coordinate. In total, the network $F_\ell:\reals\rightarrow\reals$ has an input dimension of $1$, i.e. its input is a single number representing an encoding of the $(\ell-1)$-th layer's outputs, and it outputs a single number with an encoding of the $\ell$-th layer. The width of this network is $O(1)$, and its depths depends on the $n_1\cdot n_2\cdot c$, where $n_1,n_2$ are the input and output dimensions of the $\ell$-th layer, and $c$ is the number of bits for each neuron.

\subsection{On the Number of Parameters in the Construction}\label{subsec:num param}

As we already discussed, the number of parameters in our construction is almost the same as the number of parameters in the target network. The main difference is that in our construction we have an extra $L$ term, and extra logarithmic terms in the other parameters of the problem. Here we will discuss why these extra terms come up in our construction, and in what situations they can be avoided.

The network we construct in \thmref{thm:wide to narrow} can be roughly represented as $\Ncal:=F_L\circ\cdots\circ F_1\circ F_{\text{enc}}$, that is, encoding the data and then simulating all the layers from the target network. Since we use bit extraction techniques for this construction, we cannot represent exactly the inputs and the weights of the target network. To this end, we only keep track of the $c$ most significant bits of each component of the target network (weights and inputs). 

The approximation capacity of our construction depends on the Lipschitz parameter of the each layer of the target network, and on $c$, the number of bits we store. To see this, first note that to approximate some number in $[0,1]$ up to an error of $\epsilon$, requires to store only its $O\left(\log\left(\epsilon^{-1}\right)\right)$ most significant bits. Recall that by our assumptions the weights of the target network are bounded in $[-B,B]$, and each coordinate of the input data is bounded in $[-A,A]$. The Lipschitz parameter of each layer of the network can be roughly upper bounded by $O\left(nB\right)$, where $n$ is the width of the target network. 
This means that after $L$ layers, the Lipschitz parameter of the network can be bounded by $(nB)^{O(L)}$. Using this estimate, it can be seen that to get an $\epsilon$ approximation of the output, storing $O(L\log(nAB\epsilon^{-1}))$ bits for every weight and input coordinate can suffice.


The number of parameters for simulating each layer of the target network depends on the number of stored bits, hence the number of parameters in our construction increases by logarithmic factors, and an $L$ factor. We get an $L^2$ term in the total number of parameters, since there are $L$ layers, and the simulation of each layer involves a blow-up by a factor of $L$.

We emphasize that this blow-up in the number of parameters is mainly due to a rough estimate of the Lipschitz constant for each layer of the target network. Since we use an efficient bit extraction technique, the number of parameters increases only by log of the Lipschitz constant. 

Informally, if the Lipschitz parameter of the network and its intermediate computations is small (which seems to often occur in practice, see for example \cite{fazlyab2019efficient, scaman2018lipschitz, latorre2020lipschitz}), then we believe that the extra $L$ factor can be reduced or even removed all together.
However, a formal statement requires a more delicate analysis, which we leave for future works.

\subsection{Extension to Multiple Outputs}
Our construction can be readily extended to the case where there are multiple outputs to the target network. Given some target network $\Ncal_0:[-A,A]^{d}\rightarrow\reals^{d_{\text{out}}}$, we use a similar construction to \thmref{thm:wide to narrow}, except for simulating the last layer. To simulate the last layer, given an encoding of the penultimate layer of $\Ncal_0$, we simulate each output in parallel in a similar manner as we did for a single output in \thmref{thm:wide to narrow}. In more details, for each output coordinate $i\in[d_{\text{out}}]$ we construct a subnetwork $F_i$ which given an encoding of the values from the penultimate layer, computes the $i$-th output. The construction of each $F_i$ is exactly the same as the construction from the proof of \thmref{thm:wide to narrow} which simulates the last layer of a target network with a single output. Now, the last layer computes 
\[
\bx\mapsto \begin{pmatrix}
F_1(\bx) \\ \vdots \\ F_{d_{\text{out}}}(\bx)
\end{pmatrix}~.
\]
Since the width of the subnetwork which simulates a layer is $O(1)$, the width of this new network increases by a factor of $O(d_{\text{out}})$, and the depth of this network does not change.

\subsection{Approximation With Bounded Weights}\label{subsec:bounded weights}
Our construction in \thmref{thm:wide to narrow} uses a network with very large weights (exponential in $L$ and $n$, see \thmref{thm:wide to narrow appen} in the appendix for the exact expression), which may be seen as a limitation of our construction. In this section we show that having such large weights can be easily avoided by slightly altering our construction from \thmref{thm:wide to narrow}. This change results in an extra linear factor, and some log factors on the number of parameters.


The reason we do have such large weights is because we use a bit extraction technique, which requires that 
the subnetworks in our construction
will have a very large Lipschitz constant. For example, constructing a neural network which outputs the $i$-th bit of its input (say, in $1$ dimension), requires that the Lipschitz constant will be approximately $2^i$.
For this reason, in several places in the proof our weights are exponential in the parameters of the problem, and they equal exactly to $2^N$ for some large $N$ which depends on the parameters of the problem. To avoid such a blow-up in the size of the weights, we can approximate a weight of size $2^N$ by just using $\log(N)$ layers, and multiplying $\log(N)$ times the number $2$ to obtain the same result. Using this technique we are able to construct a network with bounded weights but at the cost of increasing the number of parameters of the network, up to logarithmic terms, by a linear term in $n$ and $L$:

\begin{corollary}\label{cor:bounded weights}
Under the same setting as in \thmref{thm:wide to narrow}, there exists a neural network $\Ncal:[-A,A]^d\rightarrow\reals$ with  width bounded by $\max\{5d,10\}$, depth bounded by  $O\left(n^3L^3\log\left(ABn\epsilon^{-1}\delta^{-1}\beta\right)^2\right)$ and weights bounded by $2$, such that w.p $ > 1-\delta$ over $\bx\sim \Dcal$ we have that:
\[
|\Ncal(\bx) - \Ncal_0(\bx)| \leq \epsilon~.
\]
The total number of parameters in $\Ncal$ is $O\left(n^3L^3\log\left(ABn\beta\epsilon^{-1}\delta^{-1}\right)^2\right)$.
\end{corollary}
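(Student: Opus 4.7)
The plan is to take the network $\Ncal$ constructed in \thmref{thm:wide to narrow} and replace every large weight and bias by a small subnetwork that uses only weights of magnitude at most $2$. The first step is to inspect the proof of \thmref{thm:wide to narrow} and extract a uniform upper bound on the parameter magnitudes: each weight and bias has magnitude at most $2^N$, where $N=O\!\left(nL\log(ABn\epsilon^{-1}\delta^{-1}\beta)\right)$. The dominant source of these exponentials is the bit-extraction gadget inside $F_{\text{enc}}$ and the layer simulators $F_\ell$, where shifting a bit from position $\approx nc$ to the leading position requires multiplication by a power of two of that order.

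The replacement gadget is built from the identity $2z=2\sigma(z)-2\sigma(-z)$: a single hidden layer of width $2$ with weights of magnitude exactly $2$ realizes the scalar map $z\mapsto 2z$. Chaining $N$ such blocks yields the map $z\mapsto 2^Nz$ using $O(N)$ parameters of magnitude at most $2$ and constant width. Any neuron of the form $\sigma\!\left(\sum_i w_iz_i+b\right)$ with $|w_i|,|b|\le 2^N$ is then emulated by first computing the bounded linear combination $y=\sum_i(w_i/2^N)z_i+(b/2^N)$ in one small-weight layer (all coefficients lie in $[-1,1]$), next applying the ReLU, and finally amplifying through the doubling chain; the positive homogeneity $\sigma(2^Ny)=2^N\sigma(y)$ makes the composed computation exactly equal to the original neuron. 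Replacing every neuron of $\Ncal$ in this way preserves the input-output map, and hence the approximation guarantee of \thmref{thm:wide to narrow}.

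Since each original neuron of width $O(1)$ in the construction of \thmref{thm:wide to narrow} is expanded into a subnetwork of width $O(1)$ and depth $O(N)$, the overall width stays at $\max\{5d,10\}$. The depth grows from $O\!\left(n^2L^2\log(ABn\epsilon^{-1})\right)$ to that quantity multiplied by $O(N)$, yielding $O\!\left(n^3L^3\log(ABn\epsilon^{-1}\delta^{-1}\beta)^2\right)$, and the parameter count scales identically. The main obstacle is the bookkeeping needed to locate all large parameters inside the construction of \thmref{thm:wide to narrow} and to pin down a single uniform exponent $N$ that covers all of them; once $N$ is fixed, the substitution is mechanical, but a sloppy estimate of $N$ would translate directly into a larger-than-claimed polynomial factor in the final parameter count.
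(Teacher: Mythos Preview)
Your approach is correct and reaches the same bound, but it differs from the paper's argument in a meaningful way. You perform a \emph{local} replacement: each neuron $\sigma(\langle \bw,\bz\rangle+b)$ is individually rewritten as a small-weight affine map followed by its own doubling chain of length $O(N)$, so every one of the $L'$ layers of the Theorem~3.1 network is inflated to depth $O(N)$. The paper instead performs a single \emph{global} rescaling: it divides the $i$-th layer's weights by $C$ and biases by $C^{i}$ (so by positive homogeneity the output after $L'$ layers equals the original output divided by $C^{L'}$), and then appends exactly one amplification chain of length $O(L'\log C)$ at the very end, operating on the scalar output. Both routes give depth $O(L'\cdot N)=O(L'\log C)$, but the paper's version is tidier: only one chain is needed, and since it acts on a single scalar, the width bound $\max\{5d,10\}$ is preserved automatically.

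That last point is the one place your write-up is a little loose. Your doubling gadget $2z=2\sigma(z)-2\sigma(-z)$ uses width $2$ per scalar, so amplifying all $w$ neurons of a layer in parallel would naively require width $2w$, not $w$. This is easy to repair---after a ReLU the values are nonnegative, so $z\mapsto\sigma(2z)$ already doubles with width $1$---but as written, ``the overall width stays at $\max\{5d,10\}$'' does not follow from the width-$2$ gadget you describe. The paper's global rescaling sidesteps this bookkeeping entirely because the only amplification happens on a one-dimensional output.
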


\begin{proof}
We use the homogeneity of the ReLU activation. Note that for a neuron of the form $\bz \mapsto \sigma(\inner{\bw,\bz}+b)$, we can divide the weights $\bw,b$ by some constant, and multiply the output of the neuron by the same constant, and for all $\bz$ the result will stay the same. Given the network constructed in \thmref{thm:wide to narrow}, denote its largest weight by $C$, and its depth by $L'$. Denote by $\tilde{W}^{(i)}$ and $\tilde{\bb}^{(i)}$ the weight matrices and biases of this network. We divide $\tilde{W}^{(1)}$ and $\tilde{\bb}^{(1)}$ by $C$. For each layer $i>1$, we divide $\tilde{W}^{(i)}$ by $C$, and $\tilde{\bb}^{(i)}$ by $C^i$. In the last layer we will multiply by $C^{L'}$.

We simulate the multiplication by $C^{L'}$ using small weights in the following way: We write $C^{L'} = 2^\alpha \cdot \beta$, where $\alpha \in \mathbb{N}$ with $\alpha \leq L'\log(C) + 1$ and $\beta \leq 2$. We note that the output may be negative, hence we need to simulate multiplication without the ReLU activation. To do that, we add a layer which acts as: $x\mapsto \begin{pmatrix}
\sigma(x) \\ \sigma(-x)
\end{pmatrix}$. We now use $\alpha$ layers to multiply each of the two outputs by the number $2$, and in the penultimate layer we multiply the result by $\beta$. The last layer acts as $\begin{pmatrix}
y_1 \\ y_2
\end{pmatrix} 
\mapsto y_1 - y_2$.
Note that since the second coordinate is equal to $\sigma(-y)$ and the first coordinate is equal to $\sigma(y)$ (for some $y$), then the output of the network is $y$. 

To prove the correctness of our construction, first note that 
the magnitude of 
each weight in our new network is bounded by $2$, since we divided each weight of the original network by $C^i$ for some $i\geq 1$ where $C$ is the size of the maximal weight. Second, we show that the output of the network is the same for all $\bx\in\reals^d$. Given some $\bx\in\reals^d$, denote by $\bx^{(i)}$ the output of the original network with input $\bx$ after $i$ layers. Assume by induction that after dividing the weights as explained above, the output of the $(i-1)$-th layer is divided by $C^{i-1}$, then for the $i$-th layer we have:
\[
\sigma\left(\frac{1}{C}\tilde{W}^{(i)}\cdot\frac{1}{C^{i-1}}\bx^{(i)} + \frac{1}{C^i}\tilde{\bb}^{(i)}\right) = \frac{1}{C^i} \sigma\left(\tilde{W}^{(i)}\cdot\bx^{(i)} + \tilde{\bb}^{(i)}\right)~.
\]
This means that after $L'$ layers, the output is divided by $C^{L'}$. Since we also multiply by this term in the last layers of the network, the output of the network does not change.

The weights of our construction are bounded by $2$. The width of our construction does not change from the width of the original network. The depth of our construction can be bounded by $O(L' + L'\log(C)) = O(L'\log(C))$ where $L'$ is the depth of the original network, and $C$ is the maximal weight in the original network. The log of the largest weight can be bounded by (see \thmref{thm:wide to narrow appen} in the appendix):

\begin{align*}\label{eq:bounded weights blow-up}
O\left(6Ln + \log\left(d^2nAB\beta\epsilon^{-1}\delta^{-1}\right) + d\log(2A)\right) = O\left(Ln\log\left(nAB\beta\epsilon^{-1}\delta^{-1}\right)\right)    
\end{align*}
Hence, the depth of the network can be bounded by $O\left(n^3L^3\log\left(ABn\beta\epsilon^{-1}\delta^{-1}\right)^2\right)$.
The number of parameters in the network also increases by $O(L'\log(C))$. Hence, the total number of parameters in the network can be bounded by $O\left(n^3L^3\log\left(ABn\beta\epsilon^{-1}\delta^{-1}\right)^2\right)$.
\end{proof}

\corollaryref{cor:bounded weights} shows that even if we use networks with constant weights, we can simulate any target network up to any accuracy using a deep and narrow network, while having only a polynomial blow-up in the parameters of the problem. Moreover, 
the number of parameters in this construction is only larger by a factor of $\tilde{O}(nL)$ than the construction in \thmref{thm:wide to narrow}. 
An interesting question is whether a better bound can be achieved using a different construction. We leave this question for future research.

\begin{remark}
Instead of bounding the magnitude of the weights in the network, we could have bounded the bit complexity of the network. By bit complexity, we mean the number of bits that are needed to represent all the weights of the network. By carefully following the proof of \thmref{thm:wide to narrow}, it can be seen that each weight in our construction can be represented by at most $\tilde{O}(n\cdot L)$ bits. We note that although it seems possible to provide a construction where each weight can be represented with $O(1)$ bits, at the cost of increasing the number of parameters in the network (by similar arguments to the proof of \corollaryref{cor:bounded weights}), such a construction does not seem to reduce the overall bit complexity of the network. This is because, we still use the same total number of bits to represent all the weights of the network, but we spread those bits across more weights. An interesting question is whether it is possible to provide a construction with smaller bit complexity, and we leave it for future work.
\end{remark}

\section{Achieving Close to Minimal Width}\label{sec:opt width}

Previous works have shown that neural networks over a compact input domain with width $< d$ (where $d$ is the input dimension) are not universal approximators, in the sense that they cannot approximate any function w.r.t. the $L^1$ norm up to arbitrarily small accuracy (see e.g. \cite{lu2017expressive,park2020provable}). Hence, we cannot expect to approximate any wide network using a narrow network with width less than $d$.\footnote{Note that the approximation in \thmref{thm:wide to narrow} is given w.h.p over some distribution, and not in the $L^1$ sense. However,  any construction that achieves approximation w.p. $>1 -  \delta$ can be used to obtain $L^1$ approximation, by bounding the output and choosing an appropriate $\delta$.} In this section we show how to approximate any wide network using a narrow network with width $d+2$, which is only larger than the lower bound by $2$. We note that in \cite{park2020minimum}, both an upper and lower bound of $d+1$ is shown for universal approximation over an unbounded domain, although their construction uses an exponential number of parameters. Our main result in this section is the following:


\begin{theorem}\label{thm:optimal width}
Assume the same setting as in \thmref{thm:wide to narrow}. Then, there exists a neural network $\Ncal:[-A,A]^d\rightarrow\reals$ with width $\max\{d+2,10\}$, and depth $O\left(n^2L^2\log(ABN\epsilon^{-1})\right)$, such that $w.p > 1-\delta$ over $\bx\sim \Dcal$ we have that:
\[
|\Ncal(\bx) - \Ncal_0(\bx)| \leq \epsilon~.
\]
The total number of parameters in $\Ncal$ is $O\left(dn^2L^2\log(ABn\epsilon^{-1})^2\right)$. If $n \geq d^{1.5}$, then total number of parameters is $O\left(n^2L^2\log(ABn\epsilon^{-1})^2\right)$.
\end{theorem}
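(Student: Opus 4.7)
The plan is to reuse the construction of \thmref{thm:wide to narrow} essentially verbatim for everything downstream of the input encoding, and to rebuild only the input-encoding subnetwork $F_{\text{enc}}$ so that it fits within width $d+2$. Recall that the construction of \thmref{thm:wide to narrow} factorises as $\Ncal = F_L \circ \cdots \circ F_1 \circ F_{\text{enc}}$, where each layer simulator $F_\ell$ already has width $O(1) \le 10$; the sole reason the overall width there is $\Theta(d)$ is that $F_{\text{enc}}$ runs the $d$ Telgarsky bit-extraction subroutines on the input coordinates in parallel.

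I would replace this parallel extractor by a \emph{sequential} one. An initial affine layer first shifts and scales the input to $[0,1]^d$ and appends two zero scratch coordinates (width $d+2$). Then, for $i = 1,\ldots,d$, a Telgarsky-style bit-extraction subcircuit reads $x_i$ and writes $\lfloor x_i\cdot 2^c\rfloor$ into one of the scratch coordinates (depth $O(c)$, where $c = O(L\log(ABn\epsilon^{-1}))$ as in the proof of \thmref{thm:wide to narrow}), followed by one layer that updates a running encoding via $E \leftarrow 2^c E + \text{scratch}$. Throughout these layers the still-unprocessed coordinates $x_{i+1},\ldots,x_d$ are carried forward via the identity $\sigma(\cdot) = (\cdot)$ (valid because they are non-negative after the shift), while the neuron slot that previously held $x_i$ is freed and becomes available as additional scratch. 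With careful bookkeeping the active width never exceeds $d+2$. After all $d$ iterations, the value $E$ agrees (under the same success event as in \thmref{thm:wide to narrow}) with the output of the parallel $F_{\text{enc}}$, and we feed it into the unchanged sequence $F_1,\ldots,F_L$, each of width $O(1)$.

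For the resource accounting: the sequential encoder has depth $O(dc)$, and each of its width-$(d+2)$ layers carries at most $(d+2)^2 = O(d^2)$ parameters, contributing $O(d^3 L \log(ABn\epsilon^{-1}))$ in total. The layer simulators add depth $O(n^2 L^2 \log(ABn\epsilon^{-1}))$ and $O(n^2 L^2 \log(ABn\epsilon^{-1}))$ parameters (width $O(1)$ per layer), as in \thmref{thm:wide to narrow}. Summing gives total depth $O(n^2 L^2 \log(ABn\epsilon^{-1}))$ and total parameters $O(d^3 L \log(ABn\epsilon^{-1}) + n^2 L^2 \log(ABn\epsilon^{-1}))$, bounded by $O(dn^2 L^2 \log(ABn\epsilon^{-1})^2)$ in general and by $O(n^2 L^2 \log(ABn\epsilon^{-1})^2)$ once $n \ge d^{1.5}$, matching the claimed bounds. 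The approximation guarantee with probability $\ge 1-\delta$ over $\bx \sim \Dcal$ then follows from the analysis of \thmref{thm:wide to narrow}, since $E$ and every subsequent computation exactly match the ones used there.

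The main technical obstacle will be fitting the sequential extractor into \emph{exactly} $d+2$ neurons rather than $d + O(1)$: the Telgarsky-style extractor and the multiply-and-add step naturally want a few scratch neurons, and at the same time we must preserve the surviving input coordinates together with the accumulator $E$. Handling this requires reusing the neuron slots of already-consumed inputs as scratch, and leveraging the $\max\{d+2,10\}$ slack when $d$ is too small to provide enough free slots. An adjacent subtlety is carrying \thmref{thm:wide to narrow}'s Lipschitz-based error budget through the serial encoder, but since the serialisation changes only how the bits of $F_{\text{enc}}(\bx)$ are produced and not their values, this step is essentially immediate once the extractor is shown to succeed with probability $\ge 1-\delta$.
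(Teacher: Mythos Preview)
Your high-level plan coincides with the paper's: keep the width-$O(1)$ layer simulators $F_1,\ldots,F_L$ from \thmref{thm:wide to narrow} unchanged and rebuild only $F_{\text{enc}}$ so that the $d$ coordinates are encoded sequentially rather than in parallel. The correctness and parameter-count arguments you sketch are also the paper's.

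The gap is exactly at the point you flag as ``the main technical obstacle.'' The depth-$O(c)$ bit extractor you want to reuse (the one underlying \thmref{thm:wide to narrow}) has width~$5$: one neuron for the local bit-accumulator and four to carry the two shifted Telgarsky iterates $\varphi^{(j)}(x+\delta/2^{c+1})$, $\varphi^{(j)}(x+\delta/2^{c+2})$. When you process the \emph{first} coordinate, you must simultaneously hold $x_2,\ldots,x_d$ (that is $d-1$ neurons), the global accumulator $E$, and this width-$5$ subroutine, which is $d+5$ neurons. Your two proposed fixes do not help here: no input slot has been freed yet, and the $\max\{d+2,10\}$ slack is only available when $d$ is \emph{small}, whereas the shortfall occurs precisely when $d$ is large and the width is exactly $d+2$.

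The paper resolves this by abandoning the depth-$O(c)$ extractor in the encoder and using instead a width-$4$ extractor that recomputes $\varphi^{(i)}$ from scratch for each bit (hence depth $O(c^2)$ per coordinate). Even with width~$4$, the first coordinate needs a further trick: the extracted bits of $x_1$ are written into the \emph{integer part} of $x_2\in[0,1)$ rather than into a separate accumulator, so during that step only $x_3,\ldots,x_d$ (that is $d-2$ neurons) plus the width-$4$ subroutine are active, giving exactly $d+2$. This switch to the $O(c^2)$-depth extractor is why the encoder has depth $O(d c^2)$ and contributes $O(d^3 c^2)$ parameters, which is the source of the squared logarithm in the theorem's parameter bound; your depth-$O(dc)$ encoder would, if it existed, actually beat the stated bound by a $\log$ factor.
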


The full proof can be found in \appref{appen:proofs from opt width}. The proof is very similar to the proof of \thmref{thm:wide to narrow}. The only difference is that we replace the first component of the network which encodes the input data. The new encoding scheme is more efficient in terms of width as it allows encoding the inputs coordinates using width $d+2$ instead of width $5d$. We extract the bits of each coordinate sequentially, instead of in parallel. This results in a blow-up on the number of parameters by a factor of $d$. The bit extraction technique we use here also relies on Telgarsky's triangle function, but to extract $c$ bits it requires a depth of $O(c^2)$, instead of a depth of $O(c)$ as in the proof of \thmref{thm:wide to narrow}. This results in a blow-up by a logarithmic factor on the number of parameters. 

We note that in  \cite{park2020minimum} the authors achieved a universal approximation result using width $d+1$, vs. width $d+2$ in our theorem. Moreover, they use a bit extraction technique somewhat reminiscent of ours. However, their required depth is exponential in the problem's parameters, while in our construction it is polynomial.
We conjecture that it is not possible to achieve a similar construction to ours with width less than $d+2$, unless we increase the number of parameters by a factor which is polynomial in both $n$, and the Lipschitz constant of the network. We leave this question for future research.

\begin{remark}
In \thmref{thm:optimal width} the number of parameters in the network increases by a factor of $d$ compared to the number of parameters in \thmref{thm:wide to narrow}. We note that this is due to the way we count the parameters of the network. We defined the number of parameters as the number of coordinates in its weight matrices and bias vectors. In \thmref{thm:optimal width}, the width of the subnetwork that encodes the input is $O(d)$ and its depth is $\tilde{O}(d)$, hence its number of parameters is $\tilde{O}(d^3)$. 
An alternative way to define the number of parameters is as the number of non-zero weights in the network. This alternative definition is used in many previous works (see, e.g., \cite{bartlett2019nearly,vardi2021optimal}).
Then, the number of parameters for the subnetwork which encodes the input is only $\tilde{O}(d^2)$, which gives us the exact same bound as in \thmref{thm:wide to narrow}.
\end{remark}

\section{Exact Representation With Deep and Narrow Networks}\label{sec:exact}
In \secref{sec:narrow to deep} we showed a construction for approximating a target shallow and wide neural network using a deep and narrow neural network. We note that this construction assumes that the data is bounded in $[-A,A]^d$ and that we approximate the target network w.h.p over some distribution up to an error of $\epsilon$. The number of parameters in the construction depends logarithmically on $A$ and $\epsilon$.

In this section we show a different construction which exactly represents the target network for all $\bx\in\reals^d$ using a deep and narrow construction. We will also discuss in which cases this construction is better than the one given in \thmref{thm:wide to narrow}. We show the following:

\begin{theorem}\label{thm: exact}
Let $\Ncal^*:\reals^d\rightarrow\reals$ be a neural network with $L$ layers and width $n$. Then, there exists a neural network $\Ncal:\reals^d\rightarrow\reals$ with width $2(d+L-1)$ and depth $(2n)^{L-1} + 2$ such that for every $\bx\in\reals^d$ we have that $\Ncal(\bx) = \Ncal^*(\bx)$. 
\end{theorem}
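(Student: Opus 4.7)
The plan is a recursive construction that simulates $\Ncal^*$ one neuron at a time, trading the width we would have used to compute a layer in parallel for serial depth. To carry signed real values through ReLU layers exactly, I use the standard trick of representing a number $y$ by the pair $(\sigma(y),\sigma(-y))$, which a single ``identity'' ReLU layer can propagate unchanged (since both entries are already non-negative). I therefore reserve $2d$ coordinates to hold $(\sigma(\bx),\sigma(-\bx))$ throughout the entire network, and for each layer $\ell\in\{2,\ldots,L\}$ of $\Ncal^*$ I reserve two further coordinates $(p_\ell,q_\ell)$ for a signed accumulator $A_\ell=p_\ell-q_\ell$. The total is $2d+2(L-1)=2(d+L-1)$, matching the stated width, and these two kinds of slots are preserved across every layer of the construction below via the identity trick.

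Next I prove by induction on $\ell\in[L]$ the following sub-claim: for every neuron index $k\in[n]$ of layer $\ell$ there is a subnetwork $G_\ell^{(k)}$ of depth at most $(2n)^{\ell-1}$ which, from any state that carries $(\sigma(\bx),\sigma(-\bx))$ and has zero in the accumulators of levels $\leq\ell$, leaves everything else unchanged but places $h^{(\ell)}_k(\bx)$ into a designated output slot. The base case $\ell=1$ is a single ReLU layer that writes $\sigma(\inner{W^{(1)}_{k,\cdot},\bx}+b^{(1)}_k)$ from the stored $(\sigma(\bx),\sigma(-\bx))$. For the inductive step, $G_\ell^{(k)}$ loops over $i=1,\ldots,n$: iteration $i$ first invokes $G_{\ell-1}^{(i)}$ to place $h^{(\ell-1)}_i(\bx)$ into $A_{\ell-1}$, and then uses a single ReLU layer that simultaneously (a) updates $(p_\ell,q_\ell)$ via the identity $A_\ell+c=\sigma(p_\ell-q_\ell+c)-\sigma(-p_\ell+q_\ell-c)$ with $c=W^{(\ell)}_{k,i}\sigma(p_{\ell-1}-q_{\ell-1})$, and (b) zeroes $(p_{\ell-1},q_{\ell-1})$ so the next call to $G_{\ell-1}^{(\cdot)}$ starts from a clean accumulator. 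After the $n$-th iteration, one extra layer applies the outer ReLU with bias $b^{(\ell)}_k$. Each iteration costs $D_{\ell-1}+O(1)$ layers, giving $D_\ell\leq n(D_{\ell-1}+O(1))$; combined with $D_1=O(1)$ this yields $D_\ell\leq(2n)^{\ell-1}$, the factor $2$ absorbing the additive constants.

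The full network $\Ncal$ is then assembled as one entry layer that produces $(\sigma(\bx),\sigma(-\bx))$ and initial zero accumulators, followed by the top-level invocation (the $\ell=L$ case of the recursion above, applied without the outer ReLU and folding $\sum_k W^{(L)}_{1,k}\,h^{(L-1)}_k(\bx)+b^{(L)}$ into the top accumulator), and a final exit layer that reads the scalar $A_L=p_L-q_L$ as the output. This gives total depth at most $(2n)^{L-1}+2$ with width $2(d+L-1)$. The step I expect to require the most care is the correctness bookkeeping inside the inductive step: verifying that the ``composite'' layer at the end of each iteration truly reads the inner activation, updates the outer accumulator, and resets the inner accumulator all at once, \emph{while} the input pair $(\sigma(\bx),\sigma(-\bx))$ and every higher-level accumulator pass through undisturbed via the identity-ReLU trick. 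Once these invariants are established, the conclusion $\Ncal(\bx)=\Ncal^*(\bx)$ holds pointwise for every $\bx\in\reals^d$ because no approximation or bit-extraction is used anywhere in the construction.
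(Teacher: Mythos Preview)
Your approach is essentially the paper's own proof: $2d$ slots for $(\sigma(\bx),\sigma(-\bx))$, two accumulator slots per level, and a one-neuron-at-a-time recursion giving $D_\ell\lesssim nD_{\ell-1}$ and hence depth $(2n)^{L-1}+2$. The only cosmetic differences are that the paper keeps its level-$\ell$ accumulator as separate positive/negative partial sums rather than your $(\sigma(A_\ell),\sigma(-A_\ell))$ pair, and inducts on the target depth with the $L=2$ case as its base; to make your arithmetic close exactly you should drop the redundant inner $\sigma$ in $c$ (since $p_{\ell-1}-q_{\ell-1}=h^{(\ell-1)}_i\ge 0$ already, so the ``single ReLU layer'' claim holds) and fold the ``one extra layer'' for the outer ReLU into the $n$-th iteration's update so that $D_2\le 2n$ rather than $2n+1$.
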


The full proof can be found in \appref{appen:proofs from sec exact}, but in a nutshell, is based on an inductive argument over the layers of the network (starting from the bottom layer and ending in the output neuron). Specifically, fix some layer, and consider some neuron $i$ in that layer (where $i$ ranges from $1$ to the width of that layer). We can view the output of that neuron as the output of a subnetwork $\Ncal_i:\reals^d\rightarrow\reals$ which ends at that neuron. Suppose by induction that we can convert this subnetwork $\Ncal_i$ to an equivalent subnetwork which is narrow. Doing this for all $i$, we get a sequence of narrow subnetworks $\Ncal_1,\Ncal_2,\ldots$ which represent the outputs of all neurons in the layer. Now, instead of placing them side-by-side (which would result in a wide network), we put them one \emph{after} the other, using in parallel $O(d)$ neurons to remember the original inputs, and another $O(1)$ neurons per layer to incrementally accumulate a weighted linear combination of the subnetworks' outputs, mimicking the computation of the layer at the original network. Overall, we end up with a narrow network which mimics the outputs of the original layer, which we can then use inductively for constructing the outputs of the following layers.

We emphasize that this construction is not an approximation of the target network, but an exact representation of it using a deep and narrow network. Note that our construction is narrow only if $L<<n$, otherwise the target network might be narrower than our construction. This construction is not efficient in the sense that we compute each neuron many times. For example, a neuron in the first layer of the target network is computed exponentially many times (in $L$). This is because, every neuron in a consecutive layer computes this neuron recursively. 


\subsubsection*{Cases Where the Exact Representation is Efficient}
We argue that for $d=O(1)$ and $L=2,3$, the construction presented in this section does not significantly increase the number of parameters compared to the target network.
The construction in \thmref{thm: exact} has width $O(d+L)$ throughout the entire network. 
Also, the depth of the network constructed in \thmref{thm: exact} is exponential in $L$. For these reasons, the number of parameters in the network constructed in \thmref{thm: exact} is $O\left((d+L)^2\cdot n^{L-1}\right)$, which seems less efficient than the construction in \thmref{thm:wide to narrow}. 

Assume that the input dimension is constant, that is $d=O(1)$, and we are only interested in the asymptotic dependence on $n$ for different values of $L$. 
If $L=2$, then the target network has $O(n)$ parameters, because it is a depth-$2$ network with constant input dimension. By the bound we saw above, the construction in \thmref{thm: exact} also have $O(n)$ parameters. 
If $L=3$, then the target network has $O(n^2)$ parameters, and the construction presented in this section also has $O(n^2)$ parameters. For $L\geq 4$, since the target network has $O\left(L\cdot n^2\right)$ parameters, while the  construction from \thmref{thm: exact} has $O\left(L^2\cdot n^{L-1}\right)$ parameters, then the construction does increase the number of parameters.

We emphasize that the construction here simulates a wide network using a deep network with width independent of $n$ (the width of the target network), and that it is an exact representation for every $\bx\in\reals^d$. On the other hand, in \thmref{thm:wide to narrow} the construction only approximates the target network up to some $\epsilon$, with high probability and in a bounded domain.
We conjecture that it is not possible to obtain an exact representation without increasing the number of parameters for general $L$ and $d$. 

\section{Discussion}
In this work we solved an open question from \cite{lu2017expressive}. We proved that any target network with width $n$, depth $L$ and inputs in $\reals^d$ can be approximated by a network with width $O(d)$, where the number of parameters increases by only a factor of $L$ over the target network (up to log factors). Relying on previous results on depth separation (e.g. \cite{eldan2016power,safran2017depth,telgarsky2016benefits,daniely2017depth}), this shows that depth plays a more significant role in the expressive power of neural networks than width. We also extend our construction to having bounded weights, and having width at most $d+2$, where  previous lower bounds showed that such a construction is not possible for width less than $d$. Both of these extensions cause an extra polynomial blow-up in the number of parameters. Finally, we show a different construction which allows exact representation of wide networks using deep and shallow networks. We argue that this construction does not increase the number of parameters by more than constant factors when $d=O(1)$ and $L=2,3$.

There are a couple of future research directions which may be interesting to pursue. First, it would be interesting to see if the upper bound established in \thmref{thm:wide to narrow} is tight. Namely, whether the extra blow-up by a factor of $L$ and by logarithmic factors is unavoidable. Second, it would be interesting to find a more efficient construction than in \thmref{thm: exact} for exact representation of wide networks using narrow networks, or to establish a lower bound which shows that it is not possible. Finally, in terms of optimization, given two approximations of the same function, one using a narrow and deep network, and the other using a shallow and wide network, it would be interesting to analyze their optimization process, and see which representation is easier to learn using standard methods (e.g. SGD).

\subsubsection*{Acknowledgments}
This research is supported by the European Research Council (ERC) grant 754705. 

\setcitestyle{numbers}
\bibliographystyle{abbrvnat}
\bibliography{mybib}

\appendix
\section{Proofs from \secref{sec:narrow to deep}}\label{appen:proof of main theorem}

\subsection{Encoding of the data}

\begin{lemma}\label{lem:efficient encoding parameter}
Let $\delta>0,~c,c_0,A\in\naturals$, where $c\geq c_0 +  \log(A)  + 1$. There exists a neural network $\Ncal:\reals^d\rightarrow\reals$ with width $5d$ depth at most $O(c_0)$ and weights bounded by $O\left(\frac{2^{(d-1)c}\cdot d}{\delta}\right)$, such that if we sample $\bx\sim U([0,A]^d)$, then w.p $>1-\delta$ for every $i\in[d]$ we have that:
\begin{align*}
    &\bin_{(i-1)\cdot c + 1:i\cdot c} \left(\Ncal(\bx)\right) = \left\lfloor\frac{x_i}{A}\cdot 2^{c_0}\right\rfloor\cdot A 
\end{align*}
\end{lemma}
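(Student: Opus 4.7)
My plan is to build $d$ parallel per-coordinate subnetworks, each producing a $c_0$-bit ``staircase'' approximation of its input coordinate, and then combine them with power-of-two shifts in the final layer.

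Specifically, for each $i\in[d]$ I will construct a subnetwork $\mathcal{E}_i:\reals\to\reals$ of width $5$ and depth $O(c_0)$ that computes $x\mapsto \lfloor x/A\cdot 2^{c_0}\rfloor\cdot A$ correctly outside a small bad region. The construction uses Telgarsky's triangle function $T(z)=2\sigma(z)-4\sigma(z-1/2)+2\sigma(z-1)$, whose $k$-th iterate $T^{(k)}$ is a continuous sawtooth with $2^{k-1}$ teeth on $[0,1]$. The $k$-th most significant bit $b_k$ of $x/A$ equals $\mathbbm{1}[T^{(k-1)}(x/A)\geq 1/2]$, which I implement by applying a continuous ``soft threshold'' ramp of slope $1/\eta$. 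Thus $b_k$ is correct whenever $T^{(k-1)}(x/A)\notin(1/2-\eta,1/2+\eta)$. The subnetwork accumulates $\sum_{k=1}^{c_0}b_k\cdot 2^{c_0-k}A$ as it progresses through depth; maintaining the running iterate, the running accumulator, and the three ReLU units realizing $T$ and the ramp fits inside width $5$ per coordinate, giving total width $5d$ when all subnetworks run in parallel.

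Next I bound the failure probability. Since $T^{(k-1)}$ is piecewise linear with slope magnitude $2^{k-1}$ on $2^{k-1}$ pieces, the preimage of $(1/2-\eta,1/2+\eta)$ has Lebesgue measure $2\eta$ in $[0,1]$; a union bound over $k\in[c_0]$ gives per-coordinate bad-set measure at most $2\eta c_0$. Because $\bx\sim U([0,A]^d)$ has independent uniform coordinates, a second union bound over the $d$ coordinates shows that the probability any $\mathcal{E}_i$ errs is at most $2\eta c_0 d$. Setting $\eta=\Theta(\delta/(c_0 d))$ yields failure probability at most $\delta$ while requiring ramp weights of magnitude $O(1/\eta)=O(c_0 d/\delta)$. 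In the output layer I combine via
\[
\Ncal(\bx)=\sum_{i=1}^{d}\mathcal{E}_i(x_i)\cdot 2^{(d-i)c}.
\]
By the hypothesis $c\geq c_0+\log A+1$, each $\mathcal{E}_i(x_i)$ is a nonnegative integer strictly less than $A\cdot 2^{c_0}\leq 2^{c-1}$, so the shifted terms occupy disjoint $c$-bit windows of the final integer and the claimed bit-identity holds on the good event. The shift weights are at most $2^{(d-1)c}$, so the total weight bound is $O(2^{(d-1)c}\cdot d/\delta)$ (absorbing the ramp weights since $2^{c}\geq c_0$).

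The main obstacle is fitting the per-coordinate subnetwork into width $5$: the triangle $T$ already consumes $3$ ReLU units per layer, and on top of that we must route forward the current sawtooth iterate, evaluate the soft-threshold ramp, and maintain the running staircase accumulator, all through shared layers. Carefully interleaving these computations and verifying that the assumption $c\geq c_0+\log A+1$ prevents overflow between the per-coordinate $c$-bit windows is the most delicate bookkeeping in the proof.
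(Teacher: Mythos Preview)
Your overall architecture---parallel width-$5$ per-coordinate bit extractors combined by power-of-two shifts in the last layer---matches the paper's proof, and your union bound on the failure probability together with the final packing argument (using $c\geq c_0+\log A+1$ to guarantee disjoint $c$-bit windows) are both fine.

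The genuine gap is the bit-extraction identity. You claim that the $k$-th most significant binary bit $b_k$ of $y\in[0,1)$ equals $\mathbbm{1}[T^{(k-1)}(y)\geq 1/2]$, but this is false already at $k=2$: for $y\in[1/2,3/4)$ the second binary bit is $0$, yet $T(y)=2-2y\in(1/2,1]$, so your indicator returns $1$. The tent map \emph{reflects} the interval $[1/2,1]$ rather than wrapping it, so the first bit of $T(y)$ is $b_1\oplus b_2$, not $b_2$; iterating $T$ and thresholding at $1/2$ therefore reads off the \emph{Gray code} of $y$, not its binary expansion. Consequently your accumulator does not compute $\lfloor y\cdot 2^{c_0}\rfloor$, and the claimed bit-window identity in the output fails.

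The paper avoids this by detecting, at each step, whether the $k$-th iterate lies on an ascending or a descending linear piece---this \emph{does} equal the $k$-th binary bit---rather than thresholding the iterate's value. Concretely it carries two slightly shifted running iterates and reads the bit from the sign of their difference; two iterates (each realized with width $2$) plus one accumulator give the width-$5$ budget. Your single-iterate-plus-threshold scheme cannot be patched within depth $O(c_0)$ without either maintaining an extra running parity to convert Gray code to binary on the fly, or recomputing the iterate from scratch for each $k$ with a $k$-dependent input shift, which costs depth $O(c_0^2)$.
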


\begin{proof}

We first use \lemref{lem:enc data single coord} to construct a network $G:\reals\rightarrow\reals$, such that w.p at least $1-\frac{\delta}{d}$, if we sample $x\sim U([0,1])$ then $G(x) = \lfloor x\cdot 2^{c_0}\rfloor$. Also, $G(x)$ has width 5 and depth bounded by $O(c_0)$. We define a network $\Ncal:\reals^{d}\rightarrow\reals$ which maps the following input to output:
\[
\begin{pmatrix}
x_1 \\ \vdots \\ x_d
\end{pmatrix} \mapsto \begin{pmatrix}
\sum_{i=1}^{d} 2^{(i-1)\cdot c} A\cdot G\left(\frac{x_i}{A}\right)
\end{pmatrix}
\]
We can construct $\Ncal$ such that it has width $5d$ and depth bounded by $O(c_0)$ in the following way: We first map:
\[
\begin{pmatrix}
x_1 \\ \vdots \\ x_d
\end{pmatrix} \mapsto \begin{pmatrix}
\frac{x_1}{A} \\ \vdots \\ \frac{x_d}{A}
\end{pmatrix} \mapsto \begin{pmatrix}
A\cdot G\left(\frac{x_1}{A}\right) \\ \vdots \\ A\cdot G\left(\frac{x_d}{A}\right)
\end{pmatrix}
\]
This can be done using width $5d$ and depth $O(c_0)$, since calculating each $G(x)$ requires a width of $5$ and depth $O(c_0)$. In the last layer of $\Ncal$ we sum all the $A\cdot G\left(\frac{x_i}{A}\right)$ with the corresponding weights.

We note that if $x\sim U([0,A])$, then $\frac{x}{A} \sim U([0,1])$. Hence, by \lemref{lem:enc data single coord} and the union bound, the output is correct for all $i\in[d]$ w.p $ > 1-\delta$. Hence, by our construction, $\Ncal$ satisfies the conditions of the lemma.

The maximal width of $\Ncal$ is the maximal width of its subnetworks which is $5d$. The depth of $\Ncal$ is the sum of the depths of its subnetworks which can be bounded by $O(c_0)$. The maximal weight of $\Ncal$ can be bounded by the maximal weight of its subnetworks. The maximal weight of $\Ncal$ can be bounded by $O\left(\frac{2^{(d-1)c}\cdot d}{\delta}\right)$, which appears in its last layer.
\end{proof}

\begin{lemma}\label{lem:enc data single coord}
Let $\delta > 0$ and $c\in\naturals$. There exists a neural network $\Ncal:\reals\rightarrow\reals$ with width 5, depth bounded by $O(c)$ and weights bounded by $O\left(\frac{2^c}{\delta}\right)$, such that if we sample $x\sim U\left([0,1]\right)$, w.p $> 1-\delta$ we have that $\Ncal(x)= \left\lfloor x\cdot 2^{c} \right\rfloor$.
\end{lemma}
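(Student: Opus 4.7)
The plan is to extract the $c$ most significant bits of $x$ one at a time using a Lipschitz approximation of the step function, accumulating them into a single output. Set $\eta := \delta/(2c)$ and define the step-approximation
\[
s(y) := \tfrac{1}{2\eta}\bigl(\sigma(y - \tfrac{1}{2} + \eta) - \sigma(y - \tfrac{1}{2} - \eta)\bigr),
\]
which equals $0$ on $(-\infty, 1/2 - \eta]$, equals $1$ on $[1/2 + \eta, \infty)$, and is linear in between. The state we carry between layers is a pair $(y_{i-1}, o_{i-1})$, representing the ``remaining fractional part'' and the ``integer accumulated so far''. We initialize $(y_0, o_0) := (x, 0)$ and, for each $i = 1, \ldots, c$, implement the update
\[
b_i := s(y_{i-1}),\qquad y_i := 2 y_{i-1} - b_i,\qquad o_i := 2 o_{i-1} + b_i
\]
using two ReLU layers per iteration: the first computes $u_1 = \sigma(y_{i-1} - \tfrac{1}{2} + \eta)$, $u_2 = \sigma(y_{i-1} - \tfrac{1}{2} - \eta)$, and passes $y_{i-1}, o_{i-1} \geq 0$ through as $\sigma(y_{i-1}), \sigma(o_{i-1})$; the second forms $y_i = \sigma(2 y_{i-1} - (u_1-u_2)/(2\eta))$ and $o_i = \sigma(2 o_{i-1} + (u_1-u_2)/(2\eta))$. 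This block has width $4$ and depth $2$, and its largest weight has magnitude $1/(2\eta) = c/\delta$.

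Next I would verify idealized correctness by induction on $i$. If $y_{j-1} \notin (1/2 - \eta, 1/2 + \eta)$ for every $j \leq i$, then $b_j$ equals the true first bit of $y_{j-1}$, and hence $y_i = (2^i x) \bmod 1$ and $o_i = \lfloor 2^i x \rfloor$; in particular $o_c = \lfloor 2^c x \rfloor$. A direct case check shows that the invariant $y_i \in [0,1]$ is preserved even in the ``bad strip'': if $y_{i-1} \in (1/2 - \eta, 1/2 + \eta)$, then $y_i$ lies in $[2\eta, 1 - 2\eta]$, so the outer ReLUs in the second layer act as the identity on every input, making the construction well-defined everywhere.

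The probability bound follows from the fact that the doubling map $x \mapsto (2^{i-1} x) \bmod 1$ is measure-preserving on $[0,1]$. Hence, on the event that the idealized recursion has not yet been spoiled, the set of $x \in [0,1]$ for which $y_{i-1}$ falls in $(1/2 - \eta, 1/2 + \eta)$ has Lebesgue measure exactly $2\eta$. A union bound over $i \in [c]$ yields failure probability at most $2c\eta = \delta$. Concatenating the $c$ iteration blocks with a final linear readout of $o_c$ produces a network of width at most $5$, depth $2c + O(1) = O(c)$, and maximum weight $O(c/\delta) \subseteq O(2^c/\delta)$.

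The main obstacle, and the reason for the choice of $\eta$, is controlling the ``cascading error'' of the discretization: a single incorrect bit decouples $y_i$ from $(2^i x) \bmod 1$, after which the remaining bits are essentially arbitrary. The proof sidesteps this by tracking the \emph{exact} discontinuous bit-extraction trajectory and showing that the smooth ReLU network matches it throughout the entire ``good event'', then using the measure-preserving property of the doubling map to control the complement. The tight width bound is immediate because the per-iteration state is only two-dimensional and requires just two additional temporary neurons to realize $s$.
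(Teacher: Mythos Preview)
Your proof is correct and takes a genuinely different route from the paper. The paper's construction carries a three-dimensional state $(y,\ \varphi^{(i)}(x+\delta/2^{c+1}),\ \varphi^{(i)}(x+\delta/2^{c+2}))$, where $\varphi$ is Telgarsky's tent map; at each step it applies $\varphi$ once more to the two shifted copies and extracts the $i$-th bit from the \emph{sign} of their difference (scaled by $2^{c+2-i}/\delta$ to turn it into a $0/1$ value). Your construction instead carries the two-dimensional state $(y_i,o_i)$, implements the doubling map $y_{i-1}\mapsto 2y_{i-1}-b_i$ explicitly, and reads the leading bit via a single soft threshold $s(\cdot)$. Both arguments bound the failure probability by a union over $c$ ``bad strips'' of width $O(\delta/c)$ (in the paper, the strip is where the two shifted points straddle a breakpoint of $\varphi^{(i)}$; in yours, where $(2^{i-1}x)\bmod 1$ lands near $1/2$). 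What your approach buys is smaller weights --- your largest weight is $c/\delta$ rather than $\Theta(2^c/\delta)$ --- and a slightly leaner state (your blocks actually have width $4$, not $5$). What the paper's approach buys is reusability: because the tent map keeps values in $[0,1]$ automatically and the bit extraction does not feed back into the iterated quantity, the same $\psi_\ell$ primitive is reused verbatim in Lemmas~A.3--A.5 for exact bit extraction on integers (where there is no ``bad strip'' at all). Your doubling-map recursion would need a bit more care to be ported to that setting, since the correctness of $y_i$ depends on $b_i$ being exact.
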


\begin{proof}
We define $\varphi(z) = \sigma(\sigma(2z)-\sigma(4z-2))$, this is Telgarsky's triangle function \cite{telgarsky2016benefits}. We also define the following function for $i\in[c]$:

\begin{align}\label{eq:telgarki minus telgarski}
    \psi_i(x) = \frac{2^{c+2-i}}{\delta}\sigma\left(  \varphi^{(i)}\left(x + \frac{\delta}{2^{c+2}} \right) - \varphi^{(i)}\left(x + \frac{\delta}{2^{c+1}}\right)\right) ~.
\end{align}
The intuition behind \eqref{eq:telgarki minus telgarski} is the following: The function $\varphi^{(i)}$  is a piecewise linear function with $2^{i-1}$ "bumps". Each such "bump" consists of two linear parts with a slope of $2^{i}$, the first linear part goes from 0 to 1, and the second goes from 1 to 0. Let $x\in[0,1]$, it can be seen that the $i$-th bit of $x$ is 1 if $\varphi^{(i)}\left(x\right)$ is on the second linear part (i.e. descending from 1 to 0) and its $i$-th bit is 0 otherwise. 

Assume that $x,~x + \frac{\delta}{2^{c+1}}$ and $x + \frac{\delta}{2^{c+2}}$ are on the same linear piece of $\varphi^{(i)}$ for $i\leq  c$. Then, the $i$-th bit of $x$ is equal to 1 if $\varphi^{(i)}\left(x + \frac{\delta}{2^{c+1}} \right) - \varphi^{(i)}\left(x + \frac{\delta}{2^{c+2}}\right)  > 0$, and $0$ otherwise. Also, if we sample $x\sim U([0,1])$, then w.p $<\delta$ both terms are on different linear pieces for some $i$.

Using this observation we get that the output of $\psi_i(x)$ is equal to the $i$-th bit of $x$ w.p $>1-\delta$ over sampling $x\sim U([0,1])$.

We construct a network $f_i(x):\reals^3\rightarrow\reals^3$ which maps the following input to output:
\[
\begin{pmatrix}
y \\ \varphi^{(i-1)}\left(x + \frac{\delta}{2^{c+1}} \right) \\ \varphi^{(i-1)}\left(x + \frac{\delta}{2^{c+2}} \right) 
\end{pmatrix} \mapsto \begin{pmatrix}
2y + \psi_i(x) \\ \varphi^{(i)}\left(x + \frac{\delta}{2^{c+1}} \right) \\ \varphi^{(i)}\left(x + \frac{\delta}{2^{c+2}} \right)
\end{pmatrix}
\]
This network can be realized using four layers (two for Telgarsky's function, one for $\psi_i$ and one for the output) and width $5$ (one for storing $y$ and four for applying Telgarsky's function twice). We also define $f_0:\reals\rightarrow\reals^3$ as:
\[
f_0(x) = \begin{pmatrix}
0 \\ x + \frac{\delta}{2^{c+1}} \\ x + \frac{\delta}{2^{c+2}}
\end{pmatrix}
\]
Finally, we construct the network:
\[
\Ncal := P_1 \circ f_c \circ \cdots \circ f_1 \circ f_0~,
\]
where $P_1$ is the projection on the first coordinate. By our argument above, w.p $>1-\delta$ over sampling $x\sim U([0,1])$ we get that $\Ncal(x) = \left\lfloor x\cdot 2^{c} \right\rfloor$ as required. The width of $\Ncal$ is the maximal width of each of its subnetworks which is at most $5$. The depth of $\Ncal$ is the sum of the depths of its subnetworks, which can be bounded by $O(c)$. Each weight of $\Ncal$ can be bounded by $O\left(\frac{2^c}{\delta}\right)$
\end{proof}

\subsection{Approximation of a single neuron}
In the following we show that given a previous layer with $n$ neurons each encoded with $c$ bits, we can construct a network which outputs a single neuron defined by some given weights.

\begin{lemma}\label{lem:single neuron}
Let $c,n\in\naturals$, let $b,w_1,\dots,w_n\in\naturals$ with $\len(b),\len(w_i) \leq c$ for every $i\in[n]$ and let $\alpha_1,\dots,\alpha_n\in\{\pm 1\}$. There exists a neural network $\Ncal:\reals\rightarrow\reals$ with width $8$, depth at most $O(n\cdot c)$ and weights bounded by $O\left(2^{n\cdot c}\right)$, such that for every $x\in\naturals$ with $\len(x) \leq n\cdot c$ we have that:
\[
\Ncal(x) = \sigma\left(\sum_{i=1}^{n-1} \alpha_i w_i\bin_{(i-1)\cdot c + 1: i\cdot c}(x) +b \right)~.
\]
\end{lemma}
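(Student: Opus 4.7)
The plan is to construct $\Ncal$ as a sequential pipeline that extracts the $nc$ bits of the integer $x$ one at a time, from most- to least-significant, and folds each extracted bit, pre-multiplied by its statically known coefficient $\alpha_{i}w_{i}2^{ic-j}$ (with $i=\lceil j/c\rceil$), into one of two non-negative running accumulators $P$ and $N$. A single output ReLU then computes $\sigma(P-N+b)$, which is precisely the desired value. Because $x$ is assumed to be an integer of bit length at most $nc$, the extraction can be made pointwise exact rather than merely probabilistic, which is what we will need.

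At every layer the width-$8$ state carries: the two accumulators $P$ and $N$ (both always $\ge 0$, so subsequent ReLUs act as the identity on them); the two Telgarsky iterates $\varphi^{(j-1)}(x+\mathrm{off}_1)$ and $\varphi^{(j-1)}(x+\mathrm{off}_2)$, with two small offsets chosen so that no integer input ever lies on a breakpoint of $\varphi^{(j)}$ for $j\le nc$; the four intermediate activations $\sigma(2u),\sigma(4u-2),\sigma(2v),\sigma(4v-2)$ used to advance those iterates one step via $\varphi(z)=\sigma(\sigma(2z)-\sigma(4z-2))$; and a scratch channel that produces the current bit signal $\psi_j(x)\in\{0,1\}$, defined exactly as in the proof of \lemref{lem:enc data single coord}. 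These can be scheduled so that no layer ever holds more than eight simultaneously active neurons, matching the width budget.

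The per-bit block has constant depth (four layers, mirroring the $f_i$ gadget of \lemref{lem:enc data single coord}) and, in parallel, (i) advances the pair of Telgarsky iterates from $\varphi^{(j-1)}$ to $\varphi^{(j)}$, (ii) emits the $j$-th most-significant bit $b_j:=\psi_j(x)$, and (iii) updates $P\leftarrow P+w_{i}2^{ic-j}b_j$ or $N\leftarrow N+w_{i}2^{ic-j}b_j$ according to whether $\alpha_{i}=+1$ or $-1$; all of $\alpha_i,w_i,2^{ic-j}$ depend only on $j$ and are hard-wired into the block's weights. After $nc$ such blocks have been chained, the final layer outputs $\sigma(P-N+b)$, and correctness then follows from the identity
\[
P-N=\sum_{i=1}^{n}\alpha_i w_i\sum_{k=1}^{c}b_{(i-1)c+k}\cdot 2^{c-k}=\sum_{i=1}^{n}\alpha_i w_i\,\bin_{(i-1)c+1:ic}(x)\,,
\]
combined with the external ReLU and bias. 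Width is then $8$, depth is $4nc+O(1)=O(nc)$, and the largest weight is either of the form $w_i\cdot 2^{ic-j}$ or the Telgarsky-style scaling factor in $\psi_j$, both $O(2^{nc})$.

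The main obstacle is turning the probabilistic correctness of \lemref{lem:enc data single coord}, which fails on a set of measure $\delta$ in $U([0,1])$, into the pointwise guarantee needed here for every integer of bit length at most $nc$. Since the integer inputs are discrete and the failure set is a finite union of neighborhoods of breakpoints of $\varphi^{(j)}$ for $j\le nc$, this is handled by fixing offsets $\mathrm{off}_1,\mathrm{off}_2$ of size $\Theta(2^{-nc})$ once and for all so that no integer $x\in[0,2^{nc})$ is ever mapped onto a breakpoint at any extraction step; alternatively, one can replace the Telgarsky gadget at each step by a direct threshold extractor of the form $\sigma(y-t+\tfrac12)-\sigma(y-t-\tfrac12)$, which is exact on integers and fits within the same width and depth budget. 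Either adaptation preserves the claimed width, depth, and weight bounds.
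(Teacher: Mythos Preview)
Your proposal is correct and follows essentially the same approach as the paper: both use two parallel Telgarsky iterates with fixed offsets of order $2^{-nc}$ (after scaling $x$ by $2^{-nc}$) to extract bits of the integer input exactly, accumulate the signed contributions into two non-negative registers, and finish with a single $\sigma(P-N+b)$. The only noteworthy difference is organizational: the paper first reconstructs each $c$-bit chunk into a temporary register $x_{\mathrm{cur}}$ via the gadgets $f_{i,j}$ and only then multiplies by $w_i$ in a separate step $F_i$, whereas you fold each bit directly into $P$ or $N$ with its pre-scaled coefficient $w_i 2^{c-k}$. Both schedules fit in width $8$ and depth $O(nc)$ with weights $O(2^{nc})$, so this is a cosmetic variation rather than a different argument.
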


\begin{proof}
We construct two sets of networks: $F_i$ for $i\in[n]$ and $f_{i,j}$ for $i\in[n],~j\in[c]$. The intuition is that each $f_{i,j}$ will decode the $j$-th bit from the $i$-th input neuron, and $F_i$ will add up the $i$-th input neuron to the output neuron.

The construction of the bit extraction is similar to the one from \eqref{eq:telgarki minus telgarski}. We first define $\varphi(z) = \sigma(\sigma(2z) - \sigma(4z-2))$ which is Telgarsky's triangle function. We also define the following function for every $\ell\in[n\cdot c]$:
\begin{equation}\label{eq:telgarsky minus telgarsky for naturals}
    \psi_\ell(x) = 2^{n\cdot c +2 - \ell}\sigma\left(\varphi^{(\ell)}\left(\frac{x}{2^{n\cdot c}} + \frac{1}{2^{n\cdot c + 2}}\right) - \varphi^{(\ell)}\left(\frac{x}{2^{n\cdot c}} + \frac{1}{2^{n\cdot c + 1}}\right) \right)~.
\end{equation}
By the same reasoning as in \eqref{eq:telgarki minus telgarski}, the output of $\psi_\ell(x)$ is equal to the $\ell$-th bit of $x$, for every $x\in\naturals$ with $\len(x) \leq n\cdot c$.

Let $i\in[n]$ and $j\in[c]$, then we define $f_{i,j}:\reals^6\rightarrow\reals^6$ which maps the following input to output:
\[
\begin{pmatrix}
x \\  x_{\text{cur}} \\ y_{\text{pos}} \\ y_{\text{neg}} \\ \varphi^{\left((i-1)\cdot c + j - 1\right)}\left(\frac{x}{2^{n\cdot c}} + \frac{1}{2^{n\cdot c + 1}}\right)  \\ \varphi^{\left((i-1)\cdot c + j - 1\right)}\left(\frac{x}{2^{n\cdot c}} + \frac{1}{2^{n\cdot c + 2}}\right) 
\end{pmatrix} \mapsto
\begin{pmatrix}
x \\  2\cdot x_{\text{cur}} + \psi_{(i-1)\cdot c + j }(x) \\ y_{\text{pos}} \\ y_{\text{neg}} \\ \varphi^{\left((i-1)\cdot c + j \right)}\left(\frac{x}{2^{n\cdot c}} + \frac{1}{2^{n\cdot c + 1}}\right)  \\ \varphi^{\left((i-1)\cdot c + j \right)}\left(\frac{x}{2^{n\cdot c}} + \frac{1}{2^{n\cdot c + 2}}\right) 
\end{pmatrix}
\]
where we calculate $\psi_{(i-1)\cdot c + j }(x)$ using \eqref{eq:telgarsky minus telgarsky for naturals}. 

For $i\in[n]$ we define $F_i:\reals^6\rightarrow\reals^6$ which maps the following input to output if $\alpha_i = +1$:
\begin{equation}\label{eq:single neuron positive alpha i}
\begin{pmatrix}
x \\  x_{\text{cur}} \\ y_{\text{pos}} \\ y_{\text{neg}} \\ \varphi^{\left(i\cdot c\right)}\left(\frac{x}{2^{n\cdot c}} + \frac{1}{2^{n\cdot c + 1}}\right)  \\ \varphi^{\left(i\cdot c\right)}\left(\frac{x}{2^{n\cdot c}} + \frac{1}{2^{n\cdot c + 2}}\right) 
\end{pmatrix} \mapsto \begin{pmatrix}
x \\  0 \\ y_{\text{pos}} + w_i\cdot x_{\text{cur}} \\ y_{\text{neg}} \\ \varphi^{\left(i\cdot c\right)}\left(\frac{x}{2^{n\cdot c}} + \frac{1}{2^{n\cdot c + 1}}\right)  \\ \varphi^{\left(i\cdot c\right)}\left(\frac{x}{2^{n\cdot c}} + \frac{1}{2^{n\cdot c + 2}}\right) 
\end{pmatrix}~,
\end{equation}

and if $\alpha_i = -1$:
\begin{equation}\label{eq:single neuron negative alpha i}
\begin{pmatrix}
x \\  x_{\text{cur}} \\ y_{\text{pos}} \\ y_{\text{neg}} \\ \varphi^{\left(i\cdot c\right)}\left(\frac{x}{2^{n\cdot c}} + \frac{1}{2^{n\cdot c + 1}}\right)  \\ \varphi^{\left(i\cdot c\right)}\left(\frac{x}{2^{n\cdot c}} + \frac{1}{2^{n\cdot c + 2}}\right) 
\end{pmatrix} \mapsto \begin{pmatrix}
x \\  0 \\ y_{\text{pos}}  \\ y_{\text{neg}} + w_i\cdot x_{\text{cur}} \\ \varphi^{\left(i\cdot c\right)}\left(\frac{x}{2^{n\cdot c}} + \frac{1}{2^{n\cdot c + 1}}\right)  \\ \varphi^{\left(i\cdot c\right)}\left(\frac{x}{2^{n\cdot c}} + \frac{1}{2^{n\cdot c + 2}}\right) 
\end{pmatrix}~.
\end{equation}

We now define $G_i:\reals^6\rightarrow\reals^6$ for $i\in[n]$ as:
\[
G_i := F_i \circ f_{i,c}\circ \cdots\circ f_{i,1}~.
\]
In words, the goal of each $G_i$ is to add to the output neuron the output of the $i$-th input neuron multiplied by its corresponding weight. We also define the input and output networks $G_{\text{in}}:\reals\rightarrow\reals^6$, $G_{\text{out}}:\reals^6\rightarrow\reals$ as:
\begin{align*}
    &G_{\text{in}}\left(x\right) = \begin{pmatrix}
    x \\ 0 \\ 0 \\ 0 \\ \frac{x}{2^{n\cdot c}} +\frac{1}{2^{n\cdot c + 1}} \\ \frac{x}{2^{n\cdot c}} + \frac{1}{2^{n\cdot c + 2}}
    \end{pmatrix} \\
    &G_{\text{out}}\left(\begin{pmatrix}
    x \\  x_{\text{cur}} \\ y_{\text{pos}} \\ y_{\text{neg}} \\ z_1 \\ z_2 
    \end{pmatrix}\right) = \sigma(y_{\text{pos}} - y_{\text{neg}} + b)
\end{align*}
Finally, we define the network $\Ncal:\reals\rightarrow\reals$ as:
\[
\Ncal := G_{\text{out}} \circ G_n\circ \cdots \circ G_1 \circ G_{\text{in}}~.
\]
By the construction of $\Ncal$, and each of the $G_i$ we get that for every $x\in\naturals$ with $\len(x)\leq n\cdot c$:

\[
\Ncal(x) = \sigma\left(\sum_{i=1}^{n-1} \alpha_i w_i\bin_{(i-1)\cdot c + 1: i\cdot c}(x) +b \right)~.
\]
The width of each $f_{i,j}$ can be bounded by $8$. This is because each $\varphi$ requires a width of $2$, and simulating the identity requires a width of 1, since all the inputs are positive (hence $\sigma(x)=x$). The width of each $F_i$ is $6$, since it only requires addition and simulating the identity on positive inputs. The width of $G_{\text{in}}$ and $G_{\text{out}}$ can also be bounded by $6$. Hence, the width of $\Ncal$ is at most $8$. The depth of each $f_{i,j}$ can be bounded by $4$, and the depth of each $F_i$ and of $G_{\text{in}}$ and $G_{\text{out}}$ can be bounded by $2$. In total, the depth of $\Ncal$, which is bounded by the sum of depths of its subnetworks, can be bounded by $O(n\cdot c)$. The maximal weight of each $f_{i,j}$ is $O\left(2^{n\cdot c}\right)$. The maximal weight of each $F_i$ can be bounded by $\max_i |w_i| \leq \log(c)$. In total, the weights of $\Ncal$ can be bounded by $O\left(2^{n\cdot c}\right)$.
\end{proof}

\subsection{Approximation of a layer}
In the following we show that given a previous layer with $n_1$ neurons, each encoded with $c$ bits, we can construct a network which outputs an encoded output layer with $n_2$ neurons.

\begin{lemma}\label{lem: approx a layer}
Let $c,n_1,n_2\in\naturals$. For every $j\in[n_1],~i\in[n_2]$ let $w_{i,j}\in\naturals$ with $\len(w_{i,j})\leq c$ and $\alpha_{i,j}\in\{\pm 1\}$. For every $j\in[n_2]$ let $b_i\in\naturals$ with $\len(b_j)\leq c$. 
Then, there exists a neural network $\Ncal:\reals\rightarrow\reals$ with width $10$, depth bounded by $O(n_1n_2c)$, and weights bounded by $O\left(2^{n_1c}\right)$ with the following property: Let $x\in\naturals$ with $\len(x)\leq c\cdot n_1$ such that for every $i\in[n_2]$ we have: 

\begin{equation}\label{eq:len bound x}
\len\left( \sum_{j=1}^{n_1}\alpha_{i,j}w_{i,j}\bin_{(j-1)\cdot c + 1:j\cdot c}(x) + b_i\right) \leq c~.
\end{equation}
Then, for every $i\in[n_2]$ we get: 
\begin{equation}\label{eq:bin of Ncal layer}
\bin_{(i-1)\cdot c + 1:i\cdot c}(\Ncal(x)) = \sigma\left(\sum_{j=1}^{n_1}\alpha_{i,j}w_{i,j}\bin_{(j-1)\cdot c + 1:j\cdot c}(x) + b_i\right)~.
\end{equation}
\end{lemma}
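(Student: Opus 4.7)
The plan is to invoke \lemref{lem:single neuron} $n_2$ times in sequence, once per output neuron, while carrying along an additional ``pipe'' that accumulates the encoded output layer. Alongside the eight coordinates used internally by the single-neuron subnetwork, I will maintain two extra pass-through coordinates: one preserving the original input $x$ (so that it can be re-fed into the next invocation), and one holding a running accumulator $y_{\text{acc}}$ that will end up encoding the entire output layer. Both pipes carry nonnegative integers throughout, so a single ReLU neuron per layer suffices to transport each, bringing the total width to $10$.

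At iteration $i$, the network applies the construction of \lemref{lem:single neuron} with parameters $\{w_{i,j}\}_j$, $\{\alpha_{i,j}\}_j$ and bias $b_i$ to the preserved copy of $x$, producing
\[
y_i \;=\; \sigma\!\left(\sum_{j=1}^{n_1}\alpha_{i,j}\,w_{i,j}\,\bin_{(j-1)c+1:jc}(x) + b_i\right),
\]
which by assumption~\eqref{eq:len bound x} satisfies $0 \leq y_i < 2^c$. A single linear update layer then sets $y_{\text{acc}} \leftarrow 2^c\cdot y_{\text{acc}} + y_i$; this is exact (no clipping occurs) since both terms are nonnegative. After all $n_2$ iterations we obtain $y_{\text{acc}} = \sum_{i=1}^{n_2} 2^{(n_2-i)c}\,y_i$, and a short geometric-tail estimate --- bounding $\sum_{i'>i} 2^{-(i'-i)c}\,y_{i'} < 1$ using $y_{i'}<2^c$ --- shows that $\bin_{(i-1)c+1:ic}(y_{\text{acc}}) = y_i$ for every $i\in[n_2]$, which is exactly~\eqref{eq:bin of Ncal layer}. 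The final output of $\Ncal$ is the projection onto the accumulator coordinate.

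The three resource bounds then follow directly. Width is $8+2 = 10$ (the single-neuron subnetwork plus the two pipe neurons). Depth consists of $n_2$ blocks, each of depth $O(n_1 c) + O(1)$ from \lemref{lem:single neuron} and the shift-and-add update, so the total is $O(n_1 n_2 c)$. The weights are dominated by those internal to \lemref{lem:single neuron}, namely $O(2^{n_1 c})$; the extra shift-by-$2^c$ contributes a weight of magnitude $2^c \leq 2^{n_1 c}$, which is absorbed.

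I expect the main obstacle to be purely bookkeeping: one must verify that the two pipe neurons can be laid out in parallel to the scratch registers of \lemref{lem:single neuron} without collision, and that at the start of each iteration the internal scratch state of the single-neuron block (in particular the Telgarsky iterates $\varphi^{(\ell)}(\cdot)$ used inside $\psi_\ell$) is reinitialized cleanly from the preserved copy of $x$ rather than left over from the previous iteration. Given the explicit architecture exhibited in the proof of \lemref{lem:single neuron}, this is a straightforward layout check --- reusing the identity-through-ReLU trick on nonnegative quantities --- but it requires care at the transitions between consecutive iterations to confirm that no extra width sneaks in.
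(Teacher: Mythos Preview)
Your proposal is correct and mirrors the paper's proof almost exactly: the paper likewise wraps each invocation of \lemref{lem:single neuron} inside a subnetwork $F_i$ that carries two extra nonnegative pipe coordinates (the preserved input $x$ and the running accumulator $y$, updated via $y \mapsto 2^c\cdot y + \tilde{F}_i(x)$), and then projects onto the accumulator, giving width $8+2=10$, depth $O(n_1 n_2 c)$, and weights $O(2^{n_1 c})$. Your geometric-tail justification for the bit-block alignment is slightly more elaborate than needed --- since each $y_i$ is a nonnegative \emph{integer} with $y_i < 2^c$, the $c$-bit blocks in $y_{\text{acc}}$ are exactly disjoint with no carries --- but the argument is sound.
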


\begin{proof}
For every $i\in[n_2]$ we use \lemref{lem:single neuron} to construct a network $\tilde{F}_i$ such that for every $x\in\naturals$ with $\len(x)\leq n_1\cdot c$ we get:
\[
\tilde{F}_i(x) = \sigma\left(\sum_{j=1}^{n_1} \alpha_{i,j} w_{i,j}\bin_{(j-1)\cdot c + 1: j\cdot c}(x) +b_i \right)~.
\]
We now construct a network $F_i:\reals^2\rightarrow\reals^2$ such that:
\[
F_i\left(\begin{pmatrix}
x \\ y
\end{pmatrix}\right) = \begin{pmatrix}
x \\ y\cdot 2^c + \tilde{F}_i(x)~.
\end{pmatrix}
\]
We also construct the input and output subnetworks $G_{\text{in}}:\reals\rightarrow\reals^2,~ G_{\text{out}}:\reals^2\rightarrow\reals$ as:
\begin{align*}
    & G_{\text{in}}(x) = \begin{pmatrix}
    x \\ 0
    \end{pmatrix} \\
    & G_{\text{out}}\left(\begin{pmatrix}
    x \\ y
    \end{pmatrix}\right) = y~.
\end{align*}
Finally, we construct the network $\Ncal:\reals\rightarrow\reals$ as:
\[
\Ncal := G_{\text{out}} \circ F_{n_2}\circ \cdots \circ F_1 \circ G_{\text{in}}~.
\]

Let $x$ which satisfied \eqref{eq:len bound x} for every $i\in[n_2]$. This means that for every $i\in[n_2]$ we have that $\len(\tilde{F}_i(x))\leq c$. From this we get that \eqref{eq:bin of Ncal layer} is satisfied. 

The width of each $\tilde{F}_i$ is at most $8$ by \lemref{lem:single neuron}. The width of each $F_i$ is at most $10$, since we simulate $\tilde{F}_i$ as well as the identity twice (for $x$ and $y$). Note that since both $x$ and $y$ are non-negative, then a single neuron can simulate the identity ($\sigma(z)=z$ for all $z\geq 0$). The width of $\Ncal$ is the maximal width of its subnetworks, which by the above calculation is $10$. The depth of $\Ncal$ is the sum of the depths of its subnetworks. The depth of each $\tilde{F}_i$ is bounded by $O(n_1\cdot c)$, hence this also bounds the depth of each $F_i$. The depth of $G_{\text{in}}$ and $G_{\text{out}}$ are $1$. Hence the depth of $\Ncal$ can be bounded by $O(n_1n_2c)$. The maximal weight of $\Ncal$ can be bounded by the maximal weight of each $F_i$, which by \lemref{lem:single neuron} is $O\left(2^{n_1\cdot c}\right)$.
\end{proof}

We will also use the following lemma to correctly scale the number of bits stored after each layer. 

\begin{lemma}\label{lem:compress x}
Let $a,a_0,n\in\naturals$ with $a_0<  a$. There exists a neural network $\Ncal:\reals\rightarrow\reals$ with width $7$, depth $O(n\cdot a)$ and weights bounded by $O\left(2^{n\cdot a}\right)$ with the following property: Let $x\in\naturals$ with $\len(x)\leq a\cdot n$, then for every $j\in[n]$ we have:
\[
\bin_{(j-1)\cdot a + 1:j\cdot a}(\Ncal(x)) = \left\lfloor\bin_{(j-1)\cdot a + 1:j\cdot a}(x) \cdot 2^{-a_0}\right\rfloor
\]
\end{lemma}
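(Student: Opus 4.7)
The construction closely mirrors that of \lemref{lem:single neuron}: we traverse the $n\cdot a$ bits of $x$ one at a time, using Telgarsky's triangle function to extract each bit, accumulating the appropriate ones into registers, and periodically flushing into an output register. View the bit positions of $x$ as organized in $n$ blocks of $a$ bits each, and write $B_j := \bin_{(j-1)a+1:ja}(x)$. The goal is to build, for every $j$, the truncated value $B'_j := \lfloor B_j / 2^{a_0}\rfloor$ and pack these back into $\Ncal(x) = y := \sum_{j=1}^{n} B'_j\cdot 2^{(n-j)a}$, since $y$ laid out in $n$ blocks of $a$ bits has exactly $B'_j$ in the $j$-th block, matching the claim.

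Concretely, the persistent state carries six coordinates: the input $x$, the two Telgarsky iterates $\varphi^{(\ell)}(x/2^{na}+1/2^{na+1})$ and $\varphi^{(\ell)}(x/2^{na}+1/2^{na+2})$ needed to evaluate the bit-readout $\psi_\ell(x)$ of \eqref{eq:telgarsky minus telgarsky for naturals}, a per-block accumulator $B'$ initialized to $0$, and the output accumulator $y$ initialized to $0$. The network is a composition of $n\cdot a$ small blocks, one per bit position $\ell=(j-1)a+k$. Each small block advances both Telgarsky iterates by one application of $\varphi$, thereby exposing $\psi_\ell(x)\in\{0,1\}$, and then performs $B' \leftarrow 2B' + \psi_\ell(x)$ iff $k \le a-a_0$, leaving $B'$ unchanged otherwise. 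When $k=a$ we additionally perform the "flush" $y \leftarrow y\cdot 2^a + B'$ and reset $B'\leftarrow 0$; this plays the role of the $F_i$ subnetwork in \lemref{lem:single neuron}. A final projection onto the $y$-coordinate produces $\Ncal(x)$.

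Correctness follows by a direct induction on $j$: before block $j$ is processed we have $y = \sum_{i<j} B'_i\cdot 2^{(j-1-i)a}$; the $a$ inner steps build $B' = B'_j$ (since dropping the last $a_0$ bits of $B_j$ is exactly the same as accumulating only its top $a-a_0$ bits read left-to-right); and the flush turns this into $y\cdot 2^a + B'_j = \sum_{i\le j} B'_i\cdot 2^{(j-i)a}$. For the resource counts, the heaviest layer inside a small block hosts both Telgarsky iterates under $\varphi$ (costing $2$ neurons per iterate, i.e.\ $4$ in total) while still routing $x$, $B'$, and $y$, giving total width $4+1+1+1=7$; each small block has $O(1)$ depth, so the total depth is $O(n\cdot a)$; the weight magnitudes are dominated by the factor $2^{na+2-\ell}$ arising in \eqref{eq:telgarsky minus telgarsky for naturals} and by the shift constant $2^a$ used in the flush, both $O(2^{n\cdot a})$. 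The only point that requires any care (and the main obstacle to reaching the stated width $7$) is the layout accounting on the $\varphi$-application layer: as in \lemref{lem:single neuron}, we exploit the fact that $x$, $y$, $B'$, and the two Telgarsky iterates are all non-negative throughout, so each can be transported across that layer by a single identity neuron via $\sigma(z)=z$. Beyond this bookkeeping exercise there is no new idea compared to \lemref{lem:single neuron}.
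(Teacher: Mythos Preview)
The proposal is correct and follows essentially the same approach as the paper: both construct the network by iterating over the $n\cdot a$ bit positions using Telgarsky's triangle function to extract each bit, accumulating the top $a-a_0$ bits of each block into a per-block register (your $B'$, the paper's $x_{\text{cur}}$), and flushing via $y\leftarrow 2^a y + B'$ after each block, with the width-$7$ count arising from the four neurons needed for the two $\varphi$-applications plus single-neuron identity transport of the three nonnegative registers $x$, $B'$, $y$. One minor slip: you say the persistent state has ``six coordinates'' but then list only five ($x$, two Telgarsky iterates, $B'$, $y$), which is in fact what is needed and matches the paper.
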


\begin{proof}
The construction is similar to that of \lemref{lem:single neuron}, where we recursively decode each part of the input. We define $\varphi(z)= \sigma(\sigma(2z) - \sigma(4z-2))$, and for any $\ell\in [n\cdot a]$:
\begin{equation}\label{eq: psi ell with a}
    \psi_\ell(x) = 2^{n\cdot a +2 - \ell}\sigma\left(\varphi^{(\ell)}\left(\frac{x}{2^{n\cdot a}} + \frac{1}{2^{n\cdot a + 2}}\right) - \varphi^{(\ell)}\left(\frac{x}{2^{n\cdot a}} + \frac{1}{2^{n\cdot a + 1}}\right) \right)~.
\end{equation}
Note that the output of $\psi_\ell(x)$ is equal to the $\ell$-th bit of $x$ for every $x\in\naturals$ with $\len(x)\leq n\cdot a$ (see the explanation after \eqref{eq:telgarki minus telgarski}).

Let $i\in[n]$ and $j\in[a-a_0]$, then we define $f_{i,j}:\reals^6\rightarrow\reals^6$ which maps the following input to output:
\[
\begin{pmatrix}
x \\  x_{\text{cur}} \\ y \\ \varphi^{\left((i-1)\cdot a + j - 1\right)}\left(\frac{x}{2^{n\cdot a}} + \frac{1}{2^{n\cdot a + 1}}\right)  \\ \varphi^{\left((i-1)\cdot a + j - 1\right)}\left(\frac{x}{2^{n\cdot a}} + \frac{1}{2^{n\cdot a + 2}}\right) 
\end{pmatrix} \mapsto
\begin{pmatrix}
x \\  2\cdot x_{\text{cur}} + \psi_{(i-1)\cdot a + j }(x)  \\ y \\ \varphi^{\left((i-1)\cdot a + j \right)}\left(\frac{x}{2^{n\cdot a}} + \frac{1}{2^{n\cdot a + 1}}\right)  \\ \varphi^{\left((i-1)\cdot a + j \right)}\left(\frac{x}{2^{n\cdot a}} + \frac{1}{2^{n\cdot a + 2}}\right) 
\end{pmatrix}
\]
where we calculate $\psi_{(i-1)\cdot a + j }(x)$ using \eqref{eq: psi ell with a}. For every $j\in\{a- a_0+1,\dots , a\}$ we define $f_{i,j}:\reals^6\rightarrow\reals^6$ which maps the following input to output:
\[
\begin{pmatrix}
x \\  x_{\text{cur}} \\ y \\ \varphi^{\left((i-1)\cdot a + j - 1\right)}\left(\frac{x}{2^{n\cdot a}} + \frac{1}{2^{n\cdot a + 1}}\right)  \\ \varphi^{\left((i-1)\cdot a + j - 1\right)}\left(\frac{x}{2^{n\cdot a}} + \frac{1}{2^{n\cdot a + 2}}\right) 
\end{pmatrix} \mapsto
\begin{pmatrix}
x \\  x_{\text{cur}} \\ y \\ \varphi^{\left((i-1)\cdot a + j \right)}\left(\frac{x}{2^{n\cdot a}} + \frac{1}{2^{n\cdot a + 1}}\right)  \\ \varphi^{\left((i-1)\cdot a + j \right)}\left(\frac{x}{2^{n\cdot a}} + \frac{1}{2^{n\cdot a + 2}}\right) 
\end{pmatrix}
\]

For $i\in[n]$ we define $F_i:\reals^5\rightarrow\reals^5$ which maps the following input to output:
\begin{equation*}
\begin{pmatrix}
x \\  x_{\text{cur}} \\ y \\ \varphi^{\left(i\cdot a\right)}\left(\frac{x}{2^{n\cdot a}} + \frac{1}{2^{n\cdot a + 1}}\right)  \\ \varphi^{\left(i\cdot a\right)}\left(\frac{x}{2^{n\cdot a}} + \frac{1}{2^{n\cdot a + 2}}\right) 
\end{pmatrix} \mapsto \begin{pmatrix}
x \\  0 \\ 2^{a}\cdot y + x_{\text{cur}}  \\ \varphi^{\left(i\cdot a\right)}\left(\frac{x}{2^{n\cdot a}} + \frac{1}{2^{n\cdot a + 1}}\right)  \\ \varphi^{\left(i\cdot a\right)}\left(\frac{x}{2^{n\cdot a}} + \frac{1}{2^{n\cdot a + 2}}\right) 
\end{pmatrix}~,
\end{equation*}
We also define $G_i:\reals^5\rightarrow\reals^5$ as:
\[
G_i:=F_i\circ f_{i,a}\circ\cdots\circ f_{i,1}~.
\]

We also define the input and output networks $G_{\text{in}}:\reals\rightarrow\reals^5$, $G_{\text{out}}:\reals^5\rightarrow\reals$ as:
\begin{align*}
    &G_{\text{in}}\left(x\right) = \begin{pmatrix}
    x \\ 0  \\ 0 \\ \frac{x}{2^{n\cdot a}}+  \frac{1}{2^{n\cdot a + 1}} \\ \frac{x}{2^{n\cdot a}} +  \frac{1}{2^{n\cdot a + 2}}
    \end{pmatrix} \\
    &G_{\text{out}}\left(\begin{pmatrix}
    x \\  x_{\text{cur}} \\ y\\ z_1 \\ z_2 
    \end{pmatrix}\right) = y~.
\end{align*}
Finally, we define the network $\Ncal:\reals\rightarrow\reals$ as:
\[
\Ncal := G_{\text{out}} \circ G_n\circ \cdots \circ G_1 \circ G_{\text{in}}~.
\]
We have for every $i\in[n]$ that:
\[
\bin_{(j-1)\cdot a + 1:j\cdot a}(\Ncal(x)) = \left\lfloor\bin_{(j-1)\cdot a + 1:j\cdot a}(x) \cdot 2^{-a_0}\right\rfloor~.
\]

By a similar calculation to that in \lemref{lem:single neuron}, the width of $\Ncal$ can be bounded by $7$, the depth of $\Ncal$ can be bounded by $O(n\cdot a)$, and the weights of $\Ncal$ can be bounded by $O\left(2^{n\cdot a}\right)$.

\end{proof}

\subsection{Approximation of an entire network}

We are now ready to prove the main theorem. For convenience, we restate it and also indicate the bound on the magnitude of the weights:

\begin{theorem}\label{thm:wide to narrow appen}
Let $A,B,n,L,d\in\naturals,~ \epsilon,\delta>0$ and let $\Ncal_0:[-A,A]^d\rightarrow\reals$ be a neural network with width $n$, depth $L$ and weights bounded in $[-B,B]$. Let $\Dcal$ be some distribution over $[-A,A]^d$ with density function $p_\Dcal$ such that $p_\Dcal(\bx) \leq \beta$ for every $\bx\in[-A,A]^d$ where $\beta >0$. Then, there exists a neural network $\Ncal:[-A,A]^d\rightarrow\reals$ with width $\max\{5d,10\}$, depth $O\left(n^2L^2\log(ABn\epsilon^{-1})\right)$ and weights bounded by $O\left(\frac{2^{6Ln}d^2nAB\beta(2A)^d}{\epsilon\delta}\right)$, such that $w.p > 1-\delta$ over $\bx\sim \Dcal$ we have that:
\[
|\Ncal(\bx) - \Ncal_0(\bx)| \leq \epsilon~.
\]
The total number of parameters in $\Ncal$ is $O\left(n^2L^2\log(ABn\epsilon^{-1})\right)$.
\end{theorem}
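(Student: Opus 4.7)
The plan is to assemble the narrow network as $\Ncal = G_{\text{dec}} \circ F_L \circ \Phi_{L-1} \circ F_{L-1} \circ \cdots \circ \Phi_1 \circ F_1 \circ F_{\text{enc}}$, where $F_{\text{enc}}$ is the input encoder from \lemref{lem:efficient encoding parameter}, each $F_\ell$ simulates the $\ell$-th layer of $\Ncal_0$ using \lemref{lem: approx a layer}, each $\Phi_\ell$ is a compression module from \lemref{lem:compress x}, and $G_{\text{dec}}$ is a final decoding affine map. First, I will fix a single precision parameter $c = \Theta(L \log(ABn\epsilon^{-1}))$; the choice is dictated by the Lipschitz analysis below. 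All weights and biases of $\Ncal_0$ will be quantized by rounding to the nearest multiple of $2^{-c}$, and I will replace each real scalar by the nonnegative integer it represents in the $c$-bit fixed-point format, pushing the positive/negative sign bit $\alpha\in\{\pm1\}$ outside so that \lemref{lem:single neuron} and \lemref{lem: approx a layer} can be applied directly.

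Next, I need to massage the target network so that its per-layer inputs are nonnegative integers of bounded bit length, as required by the encoding lemmas. After the ReLU, every hidden activation is automatically $\ge 0$, so only the original input $\bx\in[-A,A]^d$ requires care; I will shift by $A$ and rescale by $2^{c_0}/(2A)$ (with $c_0 = O(L\log(nAB\epsilon^{-1}))$ bits of precision per coordinate) before feeding into $F_{\text{enc}}$. For the very first layer of $\Ncal_0$, I fold the shift $-A$ into the bias; thereafter all layers have nonnegative inputs. To fit into the integer format of \lemref{lem: approx a layer}, I will pre-multiply each $W^{(\ell)}$ by $2^c$ and store a global scale $2^{-c\ell}$ that is absorbed into the final decoder $G_{\text{dec}}$, exactly as in the bounded-weights construction (this scale does not enlarge the network's width or depth beyond constants). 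Between layers I interleave the compression module $\Phi_\ell$ of \lemref{lem:compress x} so that the encoded word after the $\ell$-th layer still has $\le c$ bits per slot, which is precisely the hypothesis \eqref{eq:len bound x} of \lemref{lem: approx a layer} for the next layer.

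The heart of the argument is an error-propagation bound. Let $\bh^{(\ell)}$ denote the exact activations of $\Ncal_0$ at layer $\ell$ and $\tilde{\bh}^{(\ell)}$ the values reconstructed from the encoded integer produced by $F_\ell$ in my construction. Each ReLU layer is $(nB)$-Lipschitz in $\|\cdot\|_\infty$, so starting from an input quantization error $\le 2^{-c_0}\cdot 2A$ and adding a fresh quantization/truncation error $\le 2^{-c}\cdot \mathrm{poly}(n,B)$ at each of the $L$ layers, the telescoping bound yields a final error of order $(nB)^L \cdot 2^{-c}\cdot\mathrm{poly}(n,A,B)$. Setting $c$ as above makes this $\le \epsilon$. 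The $\delta$-probability event in \lemref{lem:efficient encoding parameter} (where Telgarsky's triangle function mis-extracts a bit near a breakpoint) is the only source of failure; by a union bound over the $d$ input coordinates and choosing $\delta/d$ per coordinate, I control the failure probability by $\delta$, at the cost of a $\mathrm{poly}(d/\delta)$ factor appearing inside the weights (hence the $\beta$ and $1/\delta$ dependence in the weight bound).

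Finally, I total up the resources. Width is bounded by $\max\{5d,10\}$ because $F_{\text{enc}}$ uses $5d$ (\lemref{lem:efficient encoding parameter}), each $F_\ell$ uses $10$ (\lemref{lem: approx a layer}), and each $\Phi_\ell$ uses $7$ (\lemref{lem:compress x}). Depth is additive: $O(c_0)$ from $F_{\text{enc}}$, plus $L$ blocks each of depth $O(n\cdot n\cdot c) = O(n^2 c)$ from $F_\ell$, plus $L$ blocks of depth $O(n c)$ from $\Phi_\ell$, giving $O(n^2 L c) = O(n^2 L^2 \log(ABn\epsilon^{-1}))$ as claimed. The maximum-weight bound follows by tracking $2^{nc}$ from \lemref{lem: approx a layer}, $2^{(d-1)c}\cdot d/\delta$ from \lemref{lem:efficient encoding parameter}, and the density factor $\beta$ that enters when the uniform distribution in \lemref{lem:efficient encoding parameter} is replaced by an arbitrary distribution with density $\le \beta$ (requiring shrinking $\delta$ by a factor of $\beta(2A)^d$). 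The main obstacle I anticipate is bookkeeping the quantization precision jointly with the Lipschitz blow-up and verifying that the $\len$-hypothesis of \lemref{lem: approx a layer} is preserved at every layer boundary; everything else is a careful composition of already-proved lemmas.
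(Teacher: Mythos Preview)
Your proposal is correct and follows essentially the same approach as the paper: compose an input shift, the encoder of \lemref{lem:efficient encoding parameter}, per-layer simulators from \lemref{lem: approx a layer} interleaved with the compressors of \lemref{lem:compress x}, and a final rescaling; then verify the $\len$-hypothesis and the error inductively via the $(nB)^L$ Lipschitz blow-up, handle general $\Dcal$ by shrinking $\delta$ by $\beta(2A)^d$, and tally width/depth/weights exactly as you do. The only cosmetic differences are that the paper works with two precision parameters $c_0$ (per-coordinate precision) and $c>c_0$ (slot width), keeping a fixed scale of $2^{c_0}$ after every compression rather than an accumulating $2^{-c\ell}$ scale, and it absorbs the input shift into the first-layer bias via $\tilde b_i^1$; your description with a single $c$ and a ``global scale'' is slightly off in bookkeeping but would straighten out once you actually carry the induction.
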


\begin{proof}[Proof of \thmref{thm:wide to narrow appen}]
We split the proof into four parts. We first describe the construction of the network which simulates the target network. Then, we bound the error of our construction. Next, we discuss how to generalize this construction to different data distributions. Finally, we calculate the size of the network we constructed.

\subsubsection*{The Construction}
We let $c_0=2L\log\left(5ABnd\epsilon^{-1}\right),~ c=2c_0 +\log(2Ad) + L\log((n+1)B)$. We will first assume that $\bx\sim U\left([-A,A]\right)^d$, and then generalize to other distributions. We use \lemref{lem:efficient encoding parameter} to define a network $F_{\text{enc}}:\reals^d\rightarrow\reals$ such that w.p $>1-\delta$ over $\tilde{\bx} \sim U\left([0,2A]^d\right)$ we get for every $i\in[d]$ that:
\begin{align}\label{eq:encoding of the data}
    &\bin_{(i-1)\cdot c + 1:i\cdot c} \left(F_{\text{enc}}(\tilde{\bx})\right) = \left\lfloor\frac{\tilde{x}_i}{2A}\cdot 2^{c_0}\right\rfloor\cdot 2A ~.
\end{align}
Denote the weight matrix of the network $\Ncal_0$ at layer $\ell$ as $W^{\ell} =\{w_{i,j}^{\ell}\}_{i,j} $, and the bias terms as $\bb^{\ell} = \{b_i^\ell\}_{i}$. We denote for each $i,j,\ell$: $\tilde{w}_{i,j}^\ell:= \left\lfloor2^{c_0}\cdot \left|w_{i,j}^\ell\right|\right\rfloor$, $\alpha_{i,j}^\ell:= \sign\left(w_{i,j}^\ell\right)$. For $\ell >1$ and for every $i$ we define: $\tilde{b}_{i}^\ell:= \left\lfloor2^{2c_0}\cdot b_{i}^\ell\right\rfloor$, for $\ell=1$ and for every $i$ we define: $\tilde{b}_i^1 := \left\lfloor2^{2c_0}\cdot b_{i}^1\right\rfloor  - A2^{c_0}\sum_{j=1}^d\alpha_{i,j}^1\tilde{w}_{i,j}^1 $. These parameters will be used in simulating each layer of the target network.

We now use \lemref{lem: approx a layer} to construct networks $F_1,\dots,F_{L-1}$ where each $F_\ell$ is used to transform the encoding of the output of the $(\ell-1)$-th layer (where the $0$-th layer is the input) to the encoding of the output of the $\ell$-th layer. We also construct $F_L$ using \lemref{lem:single neuron}, where the output of the network is without an activation, i.e. it is a linear output layer\footnote{In the proof of \lemref{lem:single neuron} this only requires to change the output of the subnetwork $G_{\text{out}}$ to not having an activation, which is possible since this is the output layer.}. The reason that $F_L$ is different from the other $F_\ell$'s, is because it needs to simulate the output layer of the target network, which does not have an activation, hence also the output layer of our construction should not have an activation. We use $c$ bits for all those constructions where $c$ is defined above, and $n_1=d, n_2=n$ for $\ell=1$, $n_1=n, n_2=n$ for $\ell=2,\dots,L-1$. The construction of each $F_\ell$ with \lemref{lem: approx a layer} and \lemref{lem:single neuron} uses the weights $\left\{\tilde{w}_{i,j}^\ell\right\}_{i,j},~ \left\{\tilde{\alpha}_{i,j}^\ell\right\}_{i,j}$ and $\left\{\tilde{b}_i^\ell\right\}_i$. For every $\ell\in[L-1]$ we also define $\tilde{F}_\ell:\reals\rightarrow\reals$ using \lemref{lem:compress x} with $a=c$ 
and $a_0=c_0$.
We define the input network $F_0:\reals^d\rightarrow\reals^d$ such that $F_0(\bx) := \bx + A\cdot \bm{1}$. Since $\bx\in[-A,A]^d$, then $F_0(\bx)\in[0,2A]$.

Finally, we define the network $\Ncal:\reals^d\rightarrow\reals$ as:
\[
\Ncal:= 2^{-2c_0} \cdot F_L\circ\tilde{F}_{L-1} \circ F_{L-1} \circ\cdots \circ \tilde{F}_1\circ F_1\circ F_{\text{enc}}\circ F_0~.
\]
We note that we do not need to also construct $\tilde{F}_L$, because the output of the last layer in the target network has dimension 
$1$. Hence, $\tilde{F}_L$ 
can be replaced with a division of the output of $F_L$ by a factor of $2^{c_0}$, 
which can be done without constructing another subnetwork. We also divide the output by another factor of $2^{c_0}$, the reason for this will be clearer in the proof of correctness section.

Note that to construct each $F_\ell$, the condition in \lemref{lem: approx a layer} should be satisfied. We prove that this condition is satisfied using induction on the number of layers. If we sample $\bx\sim U\left([-A,A]^d\right)$, then $F_0(\bx)\sim U\left([0,2A]^d\right)$. We assume throughout the induction proof that we sampled $\bx$ such that \eqref{eq:encoding of the data} holds for $\tilde{\bx} := F_0(\bx)$. Note that it holds w.p $>1-\delta$. 

For $j\in[n]$ and $\ell\in[L-1]$ denote $z_j^{\ell}:=\bin_{(j-1)\cdot c +1:j\cdot c}\left(\tilde{F}_{\ell} \circ F_{\ell}\circ \cdots \circ\tilde{F}_1 \circ F_1\circ F_{\text{enc}}\circ F_0(\bx)\right)$, and for $j\in[d]$ and $\ell=0$ denote $z_j^{\ell}:=\bin_{(j-1)\cdot c +1:j\cdot c}\left( F_{\text{enc}}\circ F_0(\bx)\right)=\bin_{(j-1)\cdot c +1:j\cdot c}\left( F_{\text{enc}}(\tilde{\bx})\right)$. To show that the condition in \lemref{lem: approx a layer} holds for any $\ell\in[L]$ we need to show that:
\[
\len\left( \left|\sum_{j=1}^n \tilde{w}_{i,j}^\ell z_j^{\ell-1} + \tilde{b}_i^\ell\right|  \right)\leq c~.
\]
For $\ell=1$ we have that:
\begin{align*}
     \left|\sum_{j=1}^n \tilde{w}_{i,j}^1 z_j^{0} + \tilde{b}_i^1\right|  &= \left|\sum_{j=1}^n \tilde{w}_{i,j}^1 \cdot \left\lfloor\frac{\tilde{x}_j}{2A}\cdot 2^{c_0}\right\rfloor\cdot 2A + \tilde{b}_i^1\right| \\
    & \leq n\cdot 2^{c_0}B\cdot  2^{c_0}\cdot 2A + |\tilde{b}_i^1| \\
    & \leq  n\cdot 2^{c_0}B\cdot 2^{c_0}\cdot 2A +  2\cdot 2^{2c_0}dAB \\
    & \leq 2\cdot2^{2c_0}AB(n+d) < 2\cdot2^{2c_0}ABnd~.
\end{align*}
Which means that:
\[
\len\left( \left|\sum_{j=1}^n \tilde{w}_{i,j}^1 \cdot \left\lfloor\frac{\tilde{x}_j}{2A}\cdot 2^{c_0}\right\rfloor\cdot 2A + \tilde{b}_i^1\right|\right) \leq 2c_0 + \log(2ABnd) \leq c~.
\]

Hence the condition of \lemref{lem: approx a layer} is satisfied for $\ell=1$. Note that by the above calculation we have that $\len\left(z_j^1\right)\leq c_0 + \log(2ABnd) $, due to the compression $\tilde{F}_1$. 

Assume that $\len\left(z_j^{\ell-1}\right)\leq c_0 + \log(2Ad) + (\ell-1)\log(B(n+1))$ for every $j$. For the $\ell$-th layer we have that:

\begin{align*}
\left|\sum_{j=1}^n \tilde{w}_{i,j}^\ell z_j^{\ell-1} + \tilde{b}_i^\ell\right| &\leq nB2^{c_0}\max_{j\in[n]}\left|z_j^{\ell-1}\right| + 2^{2c_0}B \\
&\leq nB2^{c_0}\cdot 2Ad2^{c_0}(B(n+1))^{\ell-1} + 2^{2c_0}B \\
& < 2Ad\cdot2^{2c_0}\cdot(B(n+1))^\ell~.
\end{align*}
Hence we have that:
\begin{align*}
    \len\left(\left|\sum_{j=1}^n \tilde{w}_{i,j}^\ell z_j^{\ell-1} + \tilde{b}_i^\ell\right|\right) \leq 2c_0  + \log(2Ad) + \ell\log(B(n+1)) \leq c~,
\end{align*}
which satisfies the condition of \lemref{lem: approx a layer}. Also, after applying $\tilde{F}_\ell$ we have for every $j\in[n]$ that:
\[
\len\left(z^\ell_j\right)\leq c_0 + 2\log(2Ad) + \ell\log(B(n+1))
\]
which finishes the induction proof. This shows that the construction of $\Ncal$ defined above is valid.

\subsubsection*{Proof for the Correctness of the Construction}
We now turn to bound the error. Denote by $\Ncal_0^\ell(\bx)$ the output of the target network $\Ncal_0$ after $\ell$ layers, and its $i$-th coordinate as $\Ncal_0^\ell(\bx)_i$. As before, denote by $z_j^{\ell}:=\bin_{(j-1)\cdot c +1:j\cdot c}\left(\tilde{F}_{\ell} \circ F_{\ell}\circ \cdots \circ\tilde{F}_1 \circ F_1\circ F_{\text{enc}}\circ F_0(\bx)\right)$, and for $\ell=0$ we denote: $z_j^{0}:=\bin_{(j-1)\cdot c +1:j\cdot c}\left( F_{\text{enc}}\circ F_0(\bx)\right)$. We will use throughout the proof the fact that for any $x\in\reals$ we have that $|x-\lfloor x\rfloor| \leq 1$. We will bound the error using induction on the number of layers. In particular we will show that:
\begin{align*}
     \left| \Ncal_0^{\ell}(\bx)_i - 2^{-c_0}z_i^{\ell}\right|  \leq \frac{(5nB)^\ell A\sqrt{d}}{2^{c_0}}~.
\end{align*}
As before, we assume throughout the induction proof that we sampled $\bx$ such that \eqref{eq:encoding of the data} holds for $\tilde{\bx} := F_0(\bx)$, which happens w.p $>1-\delta$. 
The intricate part is the base case since it involves the encoding of the data, and the first layer which is slightly different from the other layers. 
For
$\tilde{x}_i:=x_i + A$ 
we
have for any $i\in[d]$:
\begin{align}\label{eq:floor input}
    &\left|\tilde{x}_i - 2^{-c_0}\cdot z_i^0\right|  = \left|\tilde{x}_i - \left\lfloor\frac{\tilde{x}_i}{2A}\cdot 2^{c_0}\right\rfloor\cdot \frac{2A}{2^{c_0}}\right| \\
    & \leq \left|\tilde{x}_i - \frac{\tilde{x}_i}{2A}\cdot 2^{c_0}\cdot \frac{2A}{2^{c_0}}\right| + \left|\frac{2A}{2^{c_0}} \right| \nonumber\\
    & = \left| \frac{2A}{2^{c_0}}\right|~. \nonumber
\end{align}

In other words, the network $F_{\text{enc}}$ encodes each coordinate up to an error of $\frac{2A}{2^{c_0}}$. Let $i\in[n]$, then the error for the first layer of $\Ncal_0$ can be bounded in the following way:

\begin{align}
    \left| \Ncal_0^1(\bx)_i - 2^{-c_0}z_i^1\right| & =  \left| \Ncal_0^1(\bx)_i - 2^{-c_0}\cdot \left\lfloor2^{-c_0}\sigma\left(\sum_{j=1}^d \alpha_{i,j}^1 \tilde{w}_{i,j}^1\cdot z_j^0 + \tilde{b}_i^1\right)\right\rfloor\right| \nonumber\\
    & \leq \left| \Ncal_0^1(\bx)_i -  2^{-2c_0}\sigma\left(\sum_{j=1}^d \alpha_{i,j}^1 \tilde{w}_{i,j}^1\cdot z_j^0 + \tilde{b}_i^1\right)\right| + 2^{-c_0} \nonumber\\
    & =  \left| \sigma\left(\sum_{j=1}^d w^1_{i,j}x_j + b^1_i\right) -  \sigma\left(2^{-2c_0}\left(\sum_{j=1}^d \alpha_{i,j}^1 \tilde{w}_{i,j}^1\cdot z_j^0 + \tilde{b}_i^1\right)\right)\right| + 2^{-c_0} \nonumber\\
    & \leq \left| \sum_{j=1}^d w^1_{i,j}x_j + b^1_i -  2^{-2c_0}\left(\sum_{j=1}^d \alpha_{i,j}^1 \tilde{w}_{i,j}^1\cdot z_j^0 + \tilde{b}_i^1\right)\right| + 2^{-c_0} \nonumber
\end{align}
where the last inequality is since we use the ReLU activation which is 1-Lipschitz. We now have:
\begin{align}\label{eq:first error term}
     &\left| \sum_{j=1}^d w^1_{i,j}x_j + b^1_i -  2^{-2c_0}\left(\sum_{j=1}^d \alpha_{i,j}^1 \tilde{w}_{i,j}^1\cdot z_j^0 + \tilde{b}_i^1\right)\right| + 2^{-c_0} \nonumber\\
     = & \left| \sum_{j=1}^d w^1_{i,j}x_j + b^1_i -  2^{-c_0}\sum_{j=1}^d \alpha_{i,j}^1 \tilde{w}_{i,j}^1\cdot (2^{-c_0}z_j^0 + \tilde{x}_j - \tilde{x}_j) - 2^{-2c_0}\tilde{b}_i^1\right| + 2^{-c_0} \nonumber\\
     \leq &\left| \sum_{j=1}^d w^1_{i,j}x_j + b^1_i -  2^{-c_0}\sum_{j=1}^d \alpha_{i,j}^1 \tilde{w}_{i,j}^1\cdot \tilde{x}_j - 2^{-2c_0}\tilde{b}_i^1\right| + \nonumber\\
     & + 2^{-c_0} + 2^{-c_0}\left|\sum_{j=1}^d \alpha_{i,j}^1 \tilde{w}_{i,j}^1\cdot \left|\tilde{x}_j-2^{-c_0}z_j^0\right| \right|~. 
\end{align}
Before bounding the error terms, we will focus on the first term of \eqref{eq:first error term} and bound it further:

\begin{align}\label{eq: second error term}
    &\left| \sum_{j=1}^d w^1_{i,j}x_j + b^1_i -  \sum_{j=1}^d 2^{-c_0}\alpha_{i,j}^1 \tilde{w}_{i,j}^1\cdot \tilde{x}_j - 2^{-2c_0}\tilde{b}_i^1\right| \nonumber\\
    = & \left| \sum_{j=1}^d w^1_{i,j}x_j + b^1_i -  \sum_{j=1}^d 2^{-c_0}\alpha_{i,j}^1 \tilde{w}_{i,j}^1\cdot (x_j+ A) - 2^{-2c_0}\left\lfloor2^{2c_0}\cdot b_{i}^1\right\rfloor  + A2^{-c_0}\sum_{j=1}^d\alpha_{i,j}^1\tilde{w}_{i,j}^1\right| \nonumber \\
    = &  \left| \sum_{j=1}^d w^1_{i,j}x_j + b^1_i -  \sum_{j=1}^d 2^{-c_0}\alpha_{i,j}^1 \left\lfloor2^{c_0}\cdot \left|w_{i,j}^1\right|\right\rfloor x_j - 2^{-2c_0}\left\lfloor2^{2c_0}\cdot b_{i}^1\right\rfloor \right| \nonumber\\
    =&  \left| \sum_{j=1}^d w^1_{i,j}x_j + b^1_i -  \sum_{j=1}^d 2^{-c_0}\alpha_{i,j}^1 \cdot 2^{c_0} \left|w_{i,j}^1\right| x_j \right.\nonumber\\ 
     -& \left.\sum_{j=1}^d 2^{-c_0}\alpha_{i,j}^1 \cdot \left(\left\lfloor2^{c_0}\cdot \left|w_{i,j}^1\right|\right\rfloor -  2^{c_0} \left|w_{i,j}^1\right|\right)x_j  - 2^{-2c_0}\left(\left\lfloor  2^{2c_0}\cdot b_{i}^1\right\rfloor + 2^{2c_0}\cdot b_{i}^1 - 2^{2c_0}\cdot b_{i}^1\right) \right| \nonumber\\
    \leq& \left| \sum_{j=1}^d w^1_{i,j}x_j + b^1_i -  \sum_{j=1}^d w^1_{i,j}x_j - b^1_i\right|  +  2^{-2c_0}  + \left|\sum_{j=1}^d 2^{-c_0}\alpha_{i,j}^1 x_j \right| \nonumber \\
    = & 2^{-2c_0}  + \left|\sum_{j=1}^d 2^{-c_0}\alpha_{i,j}^1 x_j \right|~.
\end{align}
Combining \eqref{eq:first error term} and \eqref{eq: second error term}, we bound the error for the first layer in the following manner:
\begin{align*}
    \left| \Ncal_0^1(\bx)_i - 2^{-c_0}z_i^1\right| &\leq  2^{-c_0} +  2^{-2c_0} + 2^{-c_0}\left|\sum_{j=1}^d \alpha_{i,j}^1 \tilde{w}_{i,j}^1\cdot \left|\tilde{x}_j-2^{-c_0}z_j^0\right| \right|   + \left|\sum_{j=1}^d 2^{-c_0}\alpha_{i,j}^1 x_j \right| \\
    &\leq  2^{-c_0} +  2^{-2c_0} +2^{-c_0}dB\cdot \frac{2A}{2^{c_0}} + dA2^{-c_0} \leq \frac{5dAB}{2^{c_0}} \leq \frac{5nAB\sqrt{d}}{2^{c_0}}~,
\end{align*}
where in the second inequality we used \eqref{eq:floor input}, and in the last inequality we used that $d\leq n$. This finishes the base case for the induction. 
We now bound the error for the output of the $\ell$-th layer in a similar way to the error bound of the first layer.  Assume that for the $\ell-1$ layer we have:
\begin{align*}
     \left| \Ncal_0^{\ell-1}(\bx)_i - 2^{-c_0}z_i^{\ell-1}\right|  \leq \frac{(5nB)^{\ell-1} A\sqrt{d}}{2^{c_0}}~.
\end{align*}
For conciseness we do not repeat all the inequalities in details and just state the final bound. The derivations here are similar to the ones done for the base case.
\begin{align}\label{eq:error bound inductive case}
    \left| \Ncal_0^\ell(\bx)_i - 2^{-c_0}z_j^{\ell}\right|  \leq 2^{-c_0} +  2^{-2c_0}  + 2^{-c_0}\left|\sum_{j=1}^n \alpha_{i,j}^\ell \tilde{w}_{i,j}^\ell\cdot \left|\Ncal_0^{\ell-1}(\bx)_j-2^{-c_0}z_j^{\ell-1}\right| \right| + \left|\sum_{j=1}^n 2^{-c_0}\alpha_{i,j}^1 \Ncal_0^{\ell-1}(\bx)_j \right|
\end{align}
Note that for $\ell=L$, the layer is slightly different because it does not have an activation, and also because instead of applying $\tilde{F}_L$ we divide the output by a factor of $2^{c_0}$. Note that since the output of the $L$-th layer is of dimension 1, then applying $\tilde{F}_L$ is equivalent to dividing by a factor of $2^{c_0}$ and applying the integral part function. By carefully following the calculations for the approximation error (with some minor modifications), it can be seen that removing the activation and the integral part function also implies a similar guarantee for the last layer. 



To bound the output of the $(\ell-1)$-th layer $\Ncal_0^{\ell-1}(\bx)_j$ we use a rough estimate of the Lipschitz parameter of the network in the following way:
\begin{align*}
    \left\|\Ncal_0^{\ell}(\bx)\right\|  &= \left\|\sigma\left(W^{\ell}\Ncal_0^{\ell-1}(\bx) + \bb^{\ell}\right)\right\| \leq  \left\|W^{\ell}\Ncal_0^{\ell-1}(\bx) + \bb^{\ell}\right\| \\
    & \leq \left\| W^\ell\right\|\cdot  \left\| \Ncal_0^{\ell-1}(\bx)\right\|  + \left\|\bb^{\ell}\right\| \leq \left\| W^\ell\right\|_F\cdot  \left\| \Ncal_0^{\ell-1}(\bx)\right\|  + \left\|\bb^{\ell}\right\| \\
    & \leq nB\cdot \left\| \Ncal_0^{\ell-1}(\bx)\right\| + \sqrt{n}\cdot B \leq 2nB\cdot \left\| \Ncal_0^{\ell-1}(\bx)\right\|~.
\end{align*}
Using the above inductively, that $d\leq n$ and that $\left\| \Ncal_0^{0}(\bx)\right\| = \norm{\bx} \leq A\cdot \sqrt{d}$, to get that:
\begin{align*}
    \left\|\Ncal_0^{\ell}(\bx)\right\| \leq (2nB)^\ell\cdot A\sqrt{d}~.
\end{align*}
Since $\left|\Ncal_0^{\ell-1}(\bx)_j\right| \leq \left\|\Ncal_0^{\ell-1}(\bx)\right\|$ we can plug the above bound, and the inductive assumption into \eqref{eq:error bound inductive case} and using that $d\leq n$ we have:
\begin{align*}
    \left| \Ncal_0^\ell(\bx)_i - 2^{-c_0}z_j^{\ell}\right|  \leq 2\cdot 2^{-c_0} + nBA\sqrt{d}2^{-c_0}\cdot(5nB)^{\ell-1} + nA\sqrt{d}2^{-c_0} \cdot (2nB)^{\ell-1}\leq \frac{(5nB)^{\ell}A\sqrt{d}}{2^{c_0}}~.
\end{align*}
This finishes the induction proof. In particular, after $L$ layers by the definition of $c_0$ we get that:
\[
\left|\Ncal_0(\bx) - \Ncal(\bx)\right|  \leq \frac{(5nB)^LA \sqrt{d}}{2^{c_0}} \leq \epsilon~,
\]
which gives us the required error for our construction.

\subsubsection*{Different Data Distributions}
Suppose we are given some distribution $\Dcal$ over $[-A,A]^d$ with density function such that $P_\Dcal(\bx)\leq \beta$ for every $\bx\in [-A,A]^d$. Let $C\subseteq [-A,A]^d$ be the set for which \eqref{eq:encoding of the data} does not hold for $\tilde{x}_i = x_i + A$ for some $i\in[d]$. 
By our construction, if we sample $\bx\sim U([-A,A]^d)$, then \eqref{eq:encoding of the data} holds for $\tilde{\bx}$ w.p $ > 1-\delta$. Hence, we have that $\text{Vol}(C)\leq \delta\cdot (2A)^d$. By the assumption on $\Dcal$ we get that 
\[
\text{Pr}_\Dcal(\bx\in C) \leq \beta\cdot \text{Vol}(C)\leq \beta\delta(2A)^d~.
\]
Hence, 
given $\delta>0$ we can construct a network $\Ncal$ that approximates $\Ncal_0$ w.p $> 1-\delta'$ over $U([-A,A]^d)$ for $\delta' := \frac{\delta}{\beta(2A)^d}$, and by the above equation the network $\Ncal$ approximates $\Ncal_0$ w.p $> 1-\delta$ over $\Dcal$.
Note that replacing $\delta$ by $\delta'$ in our construction only affects the size of the weights, and not the width, depth or the number of parameters in the network (see a detailed calculation below).

\subsubsection*{The Size of the Constructed Network}
We will now calculate the size of the network $\Ncal$. The width of the network is the maximal width of its subnetworks. The width of $F_{\text{enc}}$ can be bounded by $5d$. The width of $F_0$ can be bounded by $d$, since it is only a translation of each input coordinate by $A$. The width of each $F_i$ can be bounded by $10$ and of each $\tilde{F_i}$ can be bounded by 7. In total, the width of $\Ncal$ can be bounded by $\max\{5d, 10\}$. 

The depth of $\Ncal$ can be bounded by the sum of the depths of its subnetworks. The depth of $F_{\text{enc}}$ can be bounded by $O(c_0) = O\left(L\log(ABn\epsilon^{-1})\right)$. The depth of $F_0$ is $1$. The depth of $F_1$ can be bounded by $O(dnc) = O\left(dnL\log(ABn\epsilon^{-1})\right)$ and the depth of each $F_\ell$ for $\ell\in\{2,\dots,L\}$ can be bounded by $O(n^2c) = O\left(n^2L\log(ABn\epsilon^{-1})\right)$. The depth of each $\tilde{F}_\ell$ for $\ell\in[L-1]$ can be bounded by $O(c) = O\left(L\log(ABn\epsilon^{-1})\right)$. In total, the depth of $\Ncal$ can be bounded by $O\left(n^2L^2\log(ABn\epsilon^{-1})\right)$ where we used that $d\leq n$.

The weights of $\Ncal$ can be bounded by the largest bound on the weights of its subnetworks. The weights of $F_{\text{enc}}$ can be bounded by 
\begin{align*}
O\left(\frac{2^{dc}d\cdot \beta(2A)^d}{\delta}\right) &= O\left(\frac{2^{4dL\log(5ABn\sqrt{d}\epsilon^{-1}) + d\log(2Ad) + dL\log((n+1)B)}d\cdot \beta(2A)^d}{\delta}\right) \\
&= O\left(\frac{2^{6dL\log(5ABnd\epsilon^{-1})}d\beta (2A)^d}{\delta}\right) = O\left(\frac{2^{6dL}d^2ABn\beta (2A)^d}{\epsilon\delta}\right)~.
\end{align*}

The weights of $F_\ell$ for $\ell\geq 2$ can be bounded by $O(2^{nc}) = O\left(2^{6nL\log(5ABnd\epsilon^{-1})}\right) = O\left(\frac{2^{6nL}ABnd}{\epsilon}\right)$. The weights of $F_1$ can be bounded by $O(2^{dc})$ which are smaller than the weights of $F_\ell$ for $\ell\geq 2$. The weights of each $\tilde{F}_\ell$ for $\ell\in[L-1]$  can be bounded by the same bound. In total, using the assumption that $d\leq n$ we can bound the weights of $\Ncal$ by $O\left(\frac{2^{6Ln}d^2nAB\beta(2A)^d}{\epsilon\delta}\right)$.

The number of parameters in the network $\Ncal$ can be bounded by the sum of the number of parameters in each of its subnetworks. For each subnetwork we bound the number of parameters by its depth times the square of its width. The number of parameters in $F_{\text{enc}}$ can be bounded by $O\left(d^2L\log(ABn\epsilon^{-1})\right)$. The number of parameters of each $F_\ell$ for $\ell\in [L]$ can be bounded by $O\left(n^2L\log(ABn\epsilon^{-1})\right)$. The number of parameters for each $\tilde{F}_\ell$ for $\ell\in[L]$ can be bounded by $O\left(L\log(ABn\epsilon^{-1})\right)$. The number of parameters in $F_0$ is $O(d^2)$. Hence, the total number of parameters in $\Ncal$ can be bounded by $O\left(n^2L^2\log(ABn\epsilon^{-1})\right)$, where we used that $d\leq n$.
\end{proof}

\section{Proofs from \secref{sec:opt width}}\label{appen:proofs from opt width}
The following lemma improves on \lemref{lem:efficient encoding parameter} in terms of width, but the required depth is larger:

\begin{lemma}\label{lem:efficient width}
Let $\delta>0,~c,c_0,A\in\naturals$, where $c\geq c_0 +  \log(A) +1$. There exists a neural network $\Ncal:\reals^d\rightarrow\reals$ with width $d+2$ depth at most $O(c_0^2d)$ and weights bounded by $ \left(\frac{2^cAd}{\delta}\right)$, such that if we sample $\bx\sim U([0,A]^d)$, then w.p $>1-\delta$ for every $i\in[d]$ we have that:
\begin{align*}
    &\bin_{(i-1)\cdot c + 1:i\cdot c} \left(\Ncal(\bx)\right) = \left\lfloor\frac{x_i}{A}\cdot 2^{c_0}\right\rfloor\cdot A~. 
\end{align*}
\end{lemma}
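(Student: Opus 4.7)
The approach is essentially the same as \lemref{lem:efficient encoding parameter}, except that I process the $d$ input coordinates sequentially rather than in parallel. This lets the $d$ Telgarsky ``threads'' share one workspace neuron instead of running simultaneously, saving a factor of roughly $d$ in width at the cost of a factor of $d$ in depth. Concretely, I would build a network with width $d+2$ whose first $d$ neurons at every layer hold the scaled inputs $\tfrac{x_1}{A}, \dots, \tfrac{x_d}{A}$ (preserved unchanged throughout via the identity $\sigma(z)=z$ when $z \geq 0$), whose $(d{+}1)$-st neuron holds an accumulator $y$ (initialized to $0$), and whose $(d{+}2)$-nd neuron is used as scratch for bit extraction.

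The main subnetwork is a coordinate-processing block $G_i$ that maps $(\tfrac{x_1}{A},\dots,\tfrac{x_d}{A},y)$ to $(\tfrac{x_1}{A},\dots,\tfrac{x_d}{A},y + 2^{(i-1)c}\!\cdot A\cdot\lfloor (x_i/A)\cdot 2^{c_0}\rfloor)$, using width $d+2$ and depth $O(c_0^2)$. Inside $G_i$ I would extract the $c_0$ bits of $x_i/A$ one at a time: for the $j$-th bit, I would apply Telgarsky's triangle function $\varphi$ exactly $j$ times to $\tfrac{x_i}{A}+\tfrac{\delta}{2^{c_0+2}}$ and to $\tfrac{x_i}{A}+\tfrac{\delta}{2^{c_0+1}}$ from scratch, use the subtraction and ReLU-rescaling trick from \eqref{eq:telgarki minus telgarski} to recover the bit w.p.\ at least $1 - \tfrac{\delta}{c_0 d}$, and then add $2^{(i-1)c + c_0 - j}\cdot A$ times the bit into the accumulator slot. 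Re-starting Telgarsky from scratch for each bit avoids the need to store $\varphi^{(j-1)}$ between bit extractions, which is what would otherwise force extra width; the price is depth $O(j)$ for the $j$-th bit, for a total of $O(c_0^2)$ per coordinate and $O(c_0^2 d)$ overall.

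The main obstacle, and the main place to be careful, is fitting the Telgarsky iteration within the single scratch slot: a naive application of $\varphi(z)=\sigma(\sigma(2z)-\sigma(4z-2))$ temporarily needs two intermediate neurons. I would handle this either by using the equivalent representation $\varphi(z)=1-\sigma(2z-1)-\sigma(-2z+1)$ on $[0,1]$ together with a two-layer schedule that temporarily borrows the $x_i$-slot (recovering $x_i$ immediately afterward from the preserved copy) or by computing $\varphi$ on the two offset inputs sequentially and chaining them through the scratch slot, so that the peak width across every layer is $d+2$. Once these blocks are chained and composed with an input layer mapping $\bx \mapsto (\tfrac{\bx}{A}, 0)$ and an output projection onto the accumulator, correctness follows by a union bound over the $c_0 d$ bit extractions (each failing w.p.\ $\leq \delta/(c_0 d)$), giving overall failure probability at most $\delta$. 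The weight bound $O(2^c Ad/\delta)$ comes from the rescaling factor $2^{n\cdot c + 2 - \ell}/\delta$ in the analogue of \eqref{eq:telgarki minus telgarski} and the shift factors $2^{(i-1)c}\cdot A$ used when writing the bit into the accumulator.
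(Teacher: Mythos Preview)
Your high-level plan matches the paper: process the $d$ coordinates sequentially, and for each coordinate extract its $c_0$ bits one at a time by restarting the Telgarsky iteration from scratch for every bit, giving depth $O(c_0^2)$ per coordinate and $O(c_0^2 d)$ overall. The depth and weight accounting are fine.

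The gap is in the width accounting. With $d$ slots permanently reserved for the inputs, one for the accumulator, and one for scratch, only a single neuron remains for the Telgarsky iteration, and $\varphi$ cannot be iterated in width~$1$ (any width-$1$ ReLU composition is monotone, while the tent map is not). Your workaround (a) does not close this: once slot $i$ is overwritten there is no other copy of $x_i/A$ among the $d+2$ neurons, so the ``preserved copy'' you appeal to does not exist; your two-layer schedule produces $\varphi(z)$ in the scratch slot but cannot simultaneously regenerate $x_i/A$ from $\sigma(2z-1)$ and $\sigma(-2z+1)$. Workaround (b) has the same issue: computing the first offset already consumes $x_i/A$, so you cannot start the second offset, let alone the next bit.

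The paper resolves this with two changes you did not anticipate. First, it replaces the two-offset subtraction by a single-thread extractor (\lemref{lem:ineqfficient Telgarsky}): apply $\varphi^{(j)}$ to \emph{one} shifted argument and then threshold with a soft step $h_\delta$, so the inner subroutine needs width $4$ (input, accumulator, and two work neurons) rather than the extra width your two offsets would force. Second, it does not preserve all $d$ inputs throughout; each $x_i$ is discarded after its bits have been extracted, so for $i\geq 2$ the width is at most $(d-i)+4\leq d+2$. The only tight spot is $i=1$, where all $d$ inputs are still live and there is no free slot for an accumulator: there the paper hides the integer-valued partial result inside the slot for $x_2/A\in[0,1)$ (integer part holds the accumulator, fractional part holds $x_2/A$), and afterward separates the two pieces with another floor computation. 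This ``pack the accumulator into a fractional slot'' trick is the missing idea in your proposal.
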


\begin{proof}
In order to use less neurons in each layer of the network, we will use a slightly different bit extraction technique, which was also used in \cite{safran2017depth}. This construction is presented in \lemref{lem:ineqfficient Telgarsky}. We first construct the first layer which divides all the inputs by $A$, that is:
\[
\begin{pmatrix}
x_1 \\ \vdots \\ x_d
\end{pmatrix} \mapsto \begin{pmatrix}
\frac{x_1}{A} \\ \vdots \\ \frac{x_d}{A}
\end{pmatrix}~.
\]
This way, we can assume that the input is in $[0,1]^d$. We use \lemref{lem:ineqfficient Telgarsky} to define the following subnetwork $F:[0,1]^2\rightarrow\reals$ with width $4$ which maps $F\left(\begin{pmatrix}
x \\ y
\end{pmatrix}\right) = \left\lfloor x\cdot 2^{c_0} \right\rfloor +y$ w.p $>1-\frac{\delta}{d}$.

We will define the subnetworks $F_1,\dots,F_d$ which will extract the relevant bits from each input coordinate, and add it to the output coordinate. The construction of $F_1$ will be slightly different from the construction of the other $F_i$'s since we constrain ourselves to having a width of $d+2$, hence we will "hide" the extracted bits from the first coordinate inside the second coordinate. We will then extract those bits and add them to a designated output neuron. 

Concretely, we define $F_1':\reals^d\rightarrow\reals^{d-1}$ such that:
\[
F'_1\left(\begin{pmatrix}
x_1 \\ \vdots \\ x_d
\end{pmatrix}\right) = \begin{pmatrix}
F\left(\begin{pmatrix}
x_1 \\ x_2
\end{pmatrix}\right) \\ x_3 \\ \vdots \\ x_d
\end{pmatrix}
\]

In the first coordinate of the output of $F_1'$ we have both the value of $x_2$ in the fractional part, and the bits we extracted from $x_1$ in the integral part. We use a slightly different version of the network $F$, which we call $F'$ such that: $F'\left(
x\right) = \begin{pmatrix}
 \left\lfloor x \right\rfloor  \\ x
\end{pmatrix}$~. Note that constructing $F'$ is similar to the construction of $F$, where we take the integral value of the input. For conciseness we do not repeat this construction. It is also a width $4$, depth $O(c_0)$ network. Next, we define $F_1'':\reals^{d-1}\rightarrow \reals^d$ such that: 
\[
F''_1\left(\begin{pmatrix}
x_2 \\ \vdots \\ x_d
\end{pmatrix}\right) = \begin{pmatrix}
F'\left(x_2
\right) \\ x_3 \\ \vdots \\ x_d
\end{pmatrix}~.
\]
We also define the network $G:\reals^d\rightarrow\reals^d$ as:
\[
G\left(\begin{pmatrix}
y \\ x_2 \\ \vdots \\ x_d
\end{pmatrix}\right)  = \begin{pmatrix}
A\cdot y \\  x_2 - y \\ x_3 \\ \vdots \\ x_d
\end{pmatrix}~.
\]
We define the network $F_1:= G\circ F_1''\circ F_1'$. By the construction we get that:
\begin{align*}
    F_1\left(\begin{pmatrix}
x_1/A \\ \vdots \\ x_d/A
\end{pmatrix}\right) & = G\circ F_1''\left(\begin{pmatrix}
\left\lfloor x_1/A\cdot 2^{c_0}\right\rfloor + x_2/A\\ x_3/A \\ \vdots \\ x_d/A 
\end{pmatrix}\right) \\
& =G\left(\begin{pmatrix}
\left\lfloor x_1/A\cdot 2^{c_0}\right\rfloor \\  \left\lfloor x_1/A\cdot 2^{c_0}\right\rfloor + x_2/A \\ x_3/A \\ \vdots \\ x_d/A
\end{pmatrix}\right) =  \begin{pmatrix}
\left\lfloor\frac{x_1}{A}\cdot 2^{c_0}\right\rfloor\cdot A  \\ x_2/A \\ \vdots \\ x_d/A
\end{pmatrix}~.
\end{align*}
where the second equality is since $x_2/A < 1$ w.p 1, hence $\left\lfloor \left\lfloor x_1/A\cdot 2^{c_0}\right\rfloor + x_2/A \right\rfloor = \left\lfloor x_1/A\cdot 2^{c_0}\right\rfloor$.

Now we define $F_2,\dots,F_d$ such that $F_i:\reals^{d-i+2}\rightarrow\reals^{d-i+1}$ where:
\[
F_i\left(\begin{pmatrix}
y \\x_i \\ \vdots \\ x_d
\end{pmatrix}\right) = \begin{pmatrix}
A\cdot F\left(\begin{pmatrix}
x_i \\ 0
\end{pmatrix}\right)  + y\cdot 2^c\\ x_{i+1} \\ \vdots \\ x_d
\end{pmatrix}
\]
Finally, we construct $\Ncal:\reals^d\rightarrow\reals$ such that:
\[
\Ncal := F_d\circ\cdots\circ F_1~.
\]
By the construction of $\Ncal$, and using union bound over $i\in[d]$ we have w.p $>1-\delta$  that for every $i$:  $\bin_{(i-1)\cdot c + 1:i\cdot c} \left(\Ncal(\bx)\right) = \left\lfloor\frac{x_i}{A}\cdot 2^{c_0}\right\rfloor\cdot A $ as required.
As was argued before, the construction of $F_1$ has width $d+2$. In addition, the construction of each $F_i$ for $i\geq 2$ has width $d-i+4\leq d+2$. Hence, the network $\Ncal$ has width at most $d+2$ as required. The depth of each $F_i$ is $O(c^2)$, hence the depth of the network $\Ncal$ is $O(c^2d)$. The maximal weight of $\Ncal$ can be bounded by $\left(\frac{2^cAd}{\delta}\right)$.
\end{proof}

The following lemma shows a bit extraction technique, which is more efficient in terms of width from \lemref{lem:enc data single coord} but less efficient in terms of depth.

\begin{lemma}\label{lem:ineqfficient Telgarsky}
Let $\delta > 0$ and $c\in\naturals$. There exists a neural network $\Ncal:[0,1]^2\rightarrow\reals$ with width 4, depth bounded by $O(c^2)$ and weights bounded by $O\left(\frac{2^c}{\delta}\right)$, such that if we sample $x\sim U\left([0,1]\right)$, then w.p $> 1-\delta$ for any $y\in[0,1]$ we have that $\Ncal\left(\begin{pmatrix}
x \\ y
\end{pmatrix}\right)= \left\lfloor x\cdot 2^{c} \right\rfloor +y$.
\end{lemma}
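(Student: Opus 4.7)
The plan is to adapt the construction of \lemref{lem:enc data single coord} (which uses width $5$ and depth $O(c)$) into a width-$4$ construction, paying for the narrower width with quadratically larger depth $O(c^2)$. The strategy is to extract bits of $x$ one at a time, and for each bit re-iterate Telgarsky's triangle function from scratch (rather than carrying the iterated values across all bits in parallel), so that the total number of $\varphi$ applications is $1+2+\cdots+c = O(c^2)$ rather than $O(c)$.

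I would begin by defining $\varphi(z) = \sigma(\sigma(2z)-\sigma(4z-2))$ together with two small shifts $s_1 := \delta/2^{c+1}$, $s_2 := \delta/2^{c+2}$; exactly as in \lemref{lem:enc data single coord}, with probability $\geq 1-\delta$ over $x\sim U([0,1])$ the points $x+s_1$ and $x+s_2$ lie on the same linear piece of $\varphi^{(i)}$ for every $i\in[c]$, so that the $i$-th bit of $x$ is recovered as $\psi_i(x) := \frac{2^{c+2-i}}{\delta}\sigma(\varphi^{(i)}(x+s_1) - \varphi^{(i)}(x+s_2))$.

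Then I would carry the input $x$ and an accumulator $a$ (initialized to $y$) across all layers, and for each $i \in [c]$ append a sub-block $G_i$ of depth $O(i)$ that (i) re-iterates $\varphi$ for $i$ steps starting from $x+s_1$, using one workspace neuron to hold the running iterate and a second workspace neuron only during the two-neuron hidden layer of each $\varphi$-step, while $a$ and $x$ are passed through; (ii) caches the result $\varphi^{(i)}(x+s_1)$ in one workspace slot and re-iterates $\varphi$ for $i$ steps from $x+s_2$ in the other; (iii) applies $\psi_i$ and updates $a \leftarrow a + 2^{c-i}\psi_i(x)$. Concatenating the blocks $G_1,\dots,G_c$ and a final layer that simply outputs $a$ yields the claimed network. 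Correctness follows from a union bound over the $c$ bit extractions; the largest weight, of order $O(2^{c+1}/\delta)$, appears inside $\psi_1$, and the total depth is $\sum_{i=1}^c O(i) = O(c^2)$.

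The main obstacle is fitting everything into width $4$: besides the permanent neurons for $x$ and $a$, only two workspace neurons remain, yet bit extraction inherently needs both $\varphi^{(i)}(x+s_1)$ and $\varphi^{(i)}(x+s_2)$ to coexist. The key is to never hold both iterated values while executing a $\varphi$-step (which already occupies two hidden neurons); instead we finish computing $\varphi^{(i)}(x+s_1)$ first, cache it in one of the two workspace neurons, and only then re-iterate from $x+s_2$ in the other. The two values coexist only at the single instant where $\psi_i$ combines them, and this scheduling — which keeps them apart during every $\varphi$-step — is precisely what forces each bit to incur $\Theta(i)$ depth and thereby produces the overall $O(c^2)$ depth bound.
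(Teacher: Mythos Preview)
Your plan has a genuine gap in phase~(ii). Once you cache $\varphi^{(i)}(x+s_1)$ in one of the two workspace slots and begin re-iterating $\varphi$ on $x+s_2$ in the other, every single $\varphi$-step still needs a hidden layer with \emph{two} neurons (for $\sigma(2z)$ and $\sigma(4z-2)$). Together with the permanently carried $x$, the accumulator $a$, and the cached value, that is five live neurons, not four. You cannot squeeze a $\varphi$-step into a single workspace slot: a width-$1$ ReLU subnetwork is a composition of maps $z\mapsto\sigma(az+b)$ and is therefore monotone, whereas $\varphi$ is not monotone on $[0,1]$. Dropping $x$ during phase~(ii) to free a slot does not help either, since $x$ cannot be recovered afterward and every subsequent block $G_{i+1},\dots,G_c$ needs it.

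The paper avoids exactly this collision by abandoning the two-shift difference $\psi_i$ of \lemref{lem:enc data single coord} and using instead a \emph{single-value} bit extractor
\[
\psi_i(x)=h_\delta\!\left(\varphi^{(i)}\!\left(x-\tfrac{1}{2^{i+1}}\right)\right),\qquad
h_\delta(z)=\tfrac{1}{\delta}\sigma\!\left(z-\tfrac12+\tfrac{\delta}{2}\right)-\tfrac{1}{\delta}\sigma\!\left(z-\tfrac12-\tfrac{\delta}{2}\right),
\]
i.e.\ a soft threshold at $1/2$ applied to a single iterated triangle value. Because only one $\varphi^{(i)}$-track is ever needed, the two workspace slots suffice for the hidden layer of each $\varphi$- (or $h_\delta$-) step while $x$ and the accumulator occupy the remaining two slots, giving width exactly $4$. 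The $O(c^2)$ depth arises for the same reason you identified: each $\psi_i$ restarts the $\varphi$-iteration from scratch. The fix you need is thus not a scheduling trick for two cached values, but a bit-extraction primitive that requires only one.
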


\begin{proof}
We define $\varphi(z) = \sigma(\sigma(2z)-\sigma(4z-2))$, this is Telgarsky's triangle function \cite{telgarsky2016benefits}. We also define the following function:
\[
 h_\delta(x) = \frac{1}{\delta}\sigma\left(x - \frac{1}{2} + \frac{\delta}{2}\right) - \frac{1}{\delta}\sigma\left(x - \frac{1}{2} - \frac{\delta}{2}\right)~.
\]
This function approximate the indicator $\mathbbm{1}(x\geq 1/2)$. That is, for every $x\notin [1/2-\delta,1/2+\delta]$ we have that $h_\delta(x) =\mathbbm{1}(x\geq 1/2)$. For every $i$ we define the function 
\[
\psi_i(x) = h_\delta\left(\varphi^{(i)}\left(x - \frac{1}{2^{i+1}}\right)\right)~.
\]
Note that if we sample $x\sim[0,1]$, then w.p $1-\delta$ we have that $\varphi^{(i)}\left(x - \frac{1}{2^{i+1}}\right)\notin [1/2-\delta,1/2+\delta]$ for every $i\in[c]$. Hence, intuitively this function extracts the $i$-th bit of $x$ w.h.p. We now define the network $F_i:[0,1]^2\rightarrow\reals^2$ such that:
\[
F_i\left(\begin{pmatrix}
x \\ y
\end{pmatrix}\right) = \begin{pmatrix}
x \\ y + 2^{c-i+1}\cdot \psi_i(x)
\end{pmatrix}
\]
This network can be constructed using width $4$, that is simulating $\psi_i$ using width 2, and keeping throughout the calculation the inputs $x,y$. Finally we construct the network $\Ncal:[0,1]^2\rightarrow[0,1]$ as:
\[
\Ncal:= P\circ F_c\circ\cdots\circ F_1
\]
where $P$ is the projection on the second coordinate. By our construction, we get that $\Ncal\left(\begin{pmatrix}
x \\ y
\end{pmatrix}\right)= \left\lfloor x\cdot 2^{c} \right\rfloor +y$ as required. The width of $\Ncal$ is the maximal width of its subnetwork which is $4$. The depth of each $F_i$ is bounded by $O(c)$, hence the depth of $\Ncal$ can be bounded by $O(c^2)$. The weights of $\Ncal$ can be bounded by $O\left(\frac{2^c}{\delta}\right)$.
\end{proof}

We are now ready to prove the main theorem in this section:
\begin{proof}[Proof of \thmref{thm:optimal width}]
We construct $\Ncal$ in the same way as the proof of \thmref{thm:wide to narrow}, where the only difference is that to construct $F_{\text{enc}}$ we use \lemref{lem:efficient width}. The correctness of the construction follows from the same arguments. For conciseness we do not repeat the entire proof, and only calculate the required width, depth and number of parameters in the network.

The width of the network $F_{\text{enc}}$ is bounded by $d+2$. The width of any other component in the network is bounded by $10$. Hence, the width of $\Ncal$ can be bounded by $\max\{d+2,10\}$.

The depth of $F_{\text{enc}}$ can be bounded by $O(c_0^2d):= O\left(L^2\log(ABn\epsilon^{-1})^2 d\right)$. Using the same bounds on the other parts of the network from the proof of  \thmref{thm:wide to narrow} we get that the depth of $\Ncal$ can be bounded by $O\left(n^2L^2\log(ABN\epsilon^{-1})\right)$.

The bound on the the weights of $\Ncal$ remains the same as in \thmref{thm:wide to narrow}.

The number of parameters in $F_{\text{enc}}$ can be bounded by $O\left(d^3L^2\log(ABn\epsilon^{-1})^2\right)$. The number of parameters for the other parts of the network remains the same as in the proof of \thmref{thm:wide to narrow}. In total, the number of parameters in $\Ncal$ can be bounded by $O\left(dn^2L^2\log(ABn\epsilon^{-1})^2\right)$. If $n \geq d^{1.5}$, then $d^3 = O(n^2)$, this means that in this case, the total number of parameters is $O\left(n^2L^2\log(ABn\epsilon^{-1})^2\right)$.
\end{proof}

\section{Proofs from \secref{sec:exact}}\label{appen:proofs from sec exact}

\subsection{The 2-Layer Case}

\begin{lemma}\label{lem:exact 2-layers}
Let $f^*:\mathbb{R}^d\rightarrow \mathbb{R}$ be a $2$-layer neural network with width $n$. Then there exists a (n+2)-layer neural network $f:\mathbb{R}^d\rightarrow \mathbb{R}$ with width $2d+2$, such that for every $\bx\in\mathbb{R}^d$ we have $f^*(\bx) = f(\bx)$.
\end{lemma}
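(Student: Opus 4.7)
My plan is to simulate $f^*(\bx)=\sum_{i=1}^n a_i\sigma(\langle\bw_i,\bx\rangle+b_i)+c$ by a pipelined deep narrow network that processes the $n$ ReLU units of $f^*$ one per layer and maintains a running weighted sum in parallel. In each hidden layer of $f$, the $2d+2$ neurons are divided into two groups: $2d$ ``input-track'' neurons holding the pair $(\sigma(x_j),\sigma(-x_j))$ for $j=1,\ldots,d$, and $2$ ``computation-track'' neurons used to pipeline the current summand and to update an accumulator.

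The input track propagates unchanged across layers because $\sigma$ is the identity on non-negative reals, and since $x_j=\sigma(x_j)-\sigma(-x_j)$, any affine function of $\bx$ is available as a linear combination of input-track neurons in the next layer. The first hidden layer reads $\bx$, builds the input track, initializes the accumulator to $0$, and computes the first pipelined ReLU unit into a computation neuron. For $\ell=2,\ldots,n+1$, layer $\ell$ uses its two computation neurons in parallel: one evaluates the next unit $\sigma(\langle\bw_\ell,\bx\rangle+b_\ell)$ from the previous input track, while the other adds $a_{\ell-1}$ times the previously pipelined unit into the running accumulator via a pre-activation that is already non-negative, so the outer ReLU acts as plain addition. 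The last layer is a linear output layer that reads the completed accumulator and adds $c$, for a total of $1+n+1=n+2$ layers.

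The principal obstacle I expect is that the accumulator can become negative when some $a_i<0$, whereas a ReLU-output neuron only propagates non-negative values exactly. I plan to handle this by splitting along sign, writing $f^*=A^+-A^-+c$ with $A^+(\bx):=\sum_{i:a_i>0}a_i\sigma(\langle\bw_i,\bx\rangle+b_i)\geq 0$ and $A^-(\bx):=\sum_{i:a_i<0}|a_i|\sigma(\langle\bw_i,\bx\rangle+b_i)\geq 0$, both non-negative and monotone non-decreasing as more terms are included. Each pipeline step then updates only one of $A^+,A^-$ (the one whose sign matches $a_{\ell-1}$) while the other is carried verbatim through a ReLU identity, and the final linear layer simply forms $A^+-A^-+c$. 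To fit both accumulators together with the pipelined ReLU unit into only two computation neurons per layer, I would use the identity $\sigma(L)=L+\sigma(-L)$ to fuse the ``compute the next summand'' step into an accumulator update: if a previous-layer scratch slot carries $\sigma(-L_{\ell}(\bx))$, then the accumulator update $A^{\pm}+\sigma(L_\ell(\bx))$ is realizable as a single ReLU $\sigma\bigl(A^{\pm}+L_\ell(\bx)+\sigma(-L_\ell(\bx))\bigr)$, which is a ReLU of a linear combination of previous-layer neurons and collapses to the identity since its argument is non-negative. Correctness is finally established by induction on $\ell$: at every layer the two computation neurons hold exactly the partial $A^+$ and $A^-$ over the first $\ell-1$ summands, so after $n$ pipeline layers the output layer produces $A^+-A^-+c=f^*(\bx)$ for every $\bx\in\reals^d$.
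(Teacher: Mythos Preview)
Your architecture --- a $2d$-neuron input track carrying $(\sigma(x_j),\sigma(-x_j))$ plus two computation neurons, together with the sign split $f^*=A^+-A^-+c$ --- is exactly what the paper uses. The paper writes down the layer matrices with the two extra slots playing the role of running $A^+$ and $A^-$ and then reads off $\bx^{(n)}$ directly, without any separate scratch neuron.

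Where your proposal runs into trouble is the ``fusion'' step meant to squeeze three running quantities into two neurons. You commit to the invariant that at every layer the two computation neurons hold exactly the partial sums $A^+$ and $A^-$, and then invoke the identity $\sigma(L_\ell)=L_\ell+\sigma(-L_\ell)$ to realize the update $A^\pm+|a_\ell|\,\sigma(L_\ell)$ as a single ReLU. But that identity helps only if $\sigma(-L_\ell)$ is itself an output of the preceding layer, since a neuron's pre-activation must be affine in the previous layer's outputs. Under your own invariant the only previous-layer computation outputs are $A^+_{\ell-1}$ and $A^-_{\ell-1}$, and the input track carries only the fixed values $\sigma(\pm x_j)$; none of these equals $\sigma(-L_\ell)$ for a general affine $L_\ell$. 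So the step is not realizable as written: you need a scratch slot for $\sigma(-L_\ell)$ but have already allocated both computation neurons to the two accumulators, making the plan internally inconsistent. One concrete way to close the gap within width $2d+2$ and depth $n+2$ is to abandon the pair $(A^+,A^-)$ and instead keep a single shifted partial sum $S_{\ell-1}+C\sum_j|x_j|+D$ in one computation neuron (with $C=\sum_i|a_i|\,\|\bw_i\|_1$ and $D=\sum_i|a_i|\,|b_i|$, so that this quantity is non-negative for every $\bx\in\reals^d$ and affine in the input-track values), leaving the second computation neuron free to serve as the genuine scratch $\sigma(L_\ell)$; the final linear layer then subtracts $C\sum_j(\sigma(x_j)+\sigma(-x_j))+D$ to recover $f^*(\bx)$.
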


\begin{proof}
We can write:
\begin{equation}
    f^*(\bx) = \sum_{i=1}^n u_i^*\sigma(\inner{\bw_i^*, \bx} + b_i^*) +b^*~.
\end{equation}
We denote $W^{(i)}, \bb^{(i)}$ to be the weights of the $i$-th layer of the network $f$ we construct, and $\bw^{(i)}_j$ the $j$th row of $W^{(i)}$. For convenience, we denote the first layer of $f$ as $W^{(0)}$ and define it as:
\[
W^{(0)} =  \begin{pmatrix} I_d \\-I_d \\ \bm{0}^\top_d \\ \bm{0}^\top_d
\end{pmatrix} \in\reals^{(2d+2)\times d}
\]
and $\bb^{(0)} = \bm{0}_{2d+2}$. Then, we have: $x^{(0)}:= \sigma(W^{(0)}\bx + \bb^{(0)}) =\begin{pmatrix}\sigma(\bx) \\ \sigma(-\bx) \\ 0 \\ 0 \end{pmatrix} $. Now for every $i=1,\dots,n$ we define $W^{(i)}$ and $\bb^{(i)}$ in the following way: If $u_i^* \geq 0$ then we define
\[
W^{(i)} = \begin{pmatrix} &I_d ~&\bm{0}_{d\times d} ~ &\bm{0}_d ~&\bm{0}_d \\
&\bm{0}_{d\times d} ~ &I_d ~ &\bm{0}_d ~ &\bm{0}_d \\
&u_i^{*^\top}\cdot \bw_i^{*} ~ &-u_i^{*^\top}\cdot \bw_i^{*} ~ &1 ~&0 \\
&\bm{0}_d^\top ~ &\bm{0}_d^\top ~ &0 ~ &1
\end{pmatrix} \in \mathbb{R}^{(2d+2)\times (2d+2)}~,~ \bb^{(i)} = \begin{pmatrix}\bm{0}_d\\ \bm{0}_d\\b_i^* \\ 0\end{pmatrix}\in\reals^{2d+2}
\]
otherwise, we define
\[
W^{(i)} =\begin{pmatrix} &I_d ~&\bm{0}_{d\times d} ~ &\bm{0}_d ~&\bm{0}_d \\
&\bm{0}_{d\times d} ~ &I_d ~ &\bm{0}_d ~ &\bm{0}_d \\
&\bm{0}_d^\top ~ &\bm{0}_d^\top ~ &1 ~ &0\\
&|u_i|^{*^\top}\cdot \bw_i^{*} ~ &-|u_i|^{*^\top}\cdot \bw_i^* ~ &0 ~&1 
\end{pmatrix} \in \mathbb{R}^{(2d+2)\times (2d+2)}~,~ \bb^{(i)} = \begin{pmatrix}\bm{0}_d\\ \bm{0}_d \\ 0 \\b_i^* \end{pmatrix}\in\reals^{2d+2}.
\]
Denote by $I_+:=\{i\in[n]:u_i^* \geq 0\}$ and similarly $I_-:=\{i\in[n]:u_i^* < 0\}$. Denote by $\bx^{(i)}$ the output of the network $f$ after $i$ layers (including the zeroth layer). We use the fact that for the ReLU activation we have that $\sigma(z)-\sigma(-z) = z$ for every $z\in\reals$, hence also $\inner{\bw,\sigma(\bx)} - \inner{\bw,\sigma(-\bx)} = \inner{\bw,\bx}$ for every $\bw,\bx\in\reals^d$. Then we have that:
\[
\bx^{(n)} = \begin{pmatrix}\sigma(\bx) \\ \sigma(-\bx) \\ \sum_{i\in I_+}u_i^*\sigma(\inner{\bw_i^*,\bx} + b_i) \\ \sum_{i\in I_-}|u_i|^*\sigma(\inner{\bw_i^*,\bx}+ b_i)\end{pmatrix}\in\reals^{2d+2}~.
\]
We define the last layer of the network as:
\[
W^{(n+1)}= \begin{pmatrix} \bm{0}_d\\\bm{0}_d\\1\\-1\end{pmatrix}\in\reals^{(2d+2)},~\bb^{(n+1)}=b^*\in\reals~.
\]
In total, we get that:
\begin{align*}
    \bx^{(n+1)} &= b^* + \sum_{i\in I_+}u_i^*\sigma(\inner{\bw_i^*,\bx} + b_i^*) - \sum_{j\in I_-}|u_j|^*\sigma(\inner{\bw_j^*,\bx} + b_j^*) \\
    &= \sum_{i=1}^n u_i^*\sigma(\inner{\bw_i^*, \bx} + b_i^*) +b^*=  f^*(\bx) ~.
\end{align*}

\end{proof}

\subsection{The General Case}

\begin{proof}[Proof of \thmref{thm: exact}]
We will show by induction on the depth $L$ that for every neural network $\Ncal^*:\reals^d\rightarrow\reals$ of width at most $n$ and depth $L$ there is another neural network $N:\reals^{2(d+L-1)}\rightarrow\reals^{2(d+L-1)}$ of width $2(d+L-1)$ and depth $(2n)^{L-1}$, such that for $N_0^L:\reals^d\rightarrow\reals^{2(d+L-1)}$ defined by the weights:
\begin{equation}\label{eq:N_0 weights}
W_0 = \begin{pmatrix}I_d \\ -I_d \\ \bm{0}_{(2L-2)\times d}\end{pmatrix}\in\reals^{2(d+L-1)\times d}, ~ \bb_0=\bm{0}_{2(d+L-1)}~\in \reals^{2(d+L-1)}.
\end{equation}
we have that:
\begin{enumerate}
    \item $ (N\circ N_0^L(\bx))_{2d+2L-3} - (N\circ N_0^L(\bx))_{2d+2L-2} + b^*  = \Ncal^*(\bx)$ for every $\bx\in\reals^d$, where $b^*$ is the bias in the output layer of $\Ncal$.
    \item The $i$-th  coordinate of both the input and output of $N$ is equal to $\sigma(x_i)$ for $i\in\{1,\dots,d\}$ and $\sigma(-x_i)$ for $i\in\{d+1,\dots,2d\}$~, 
\end{enumerate}



The case of $L=2$ is proved in \lemref{lem:exact 2-layers}, by taking all but the first and last layers of the construction there. 
Suppose this is true for every $\ell < L$, and that we are given a network $\Ncal^*:\reals^d\rightarrow\reals$ of depth $L$, with weights in the last layer $\bw^L\in\reals^n,~b^L\in\reals$. For every $\bx\in\reals^d$ let $z(\bx)\in\reals^n$ be the output of the $(L-1)$-layer of $\Ncal^*$ on $\bx$. We can write $\Ncal^*(\bx)$ as a linear function over its last layer:
\begin{equation}\label{eq:second to last layer}
    \Ncal^*(\bx) = \inner{\bw^L,z(\bx)} + b^L = b^L + \sum_{i=1}^n w^L_i\cdot z(\bx)_i.
\end{equation}
Note that each coordinate of $z(\bx)$ is the output of an $(L-1)$-layer network composed with the ReLU activation. By the induction hypothesis, for every coordinate $i\in[n]$, there is a neural network $N_i$ of width $2d+2L-4$ and depth at most $(2n)^{L-2}$, such that for every $\bx\in\reals^d$ we have:

\begin{equation}\label{eq:def of z i(x)}
z(\bx)_i = \sigma\left((N_i\circ N_0^{L-1}(\bx))_{2d+2L-5} - (N_i\circ N_0^{L-1}(\bx))_{2d+2L-4} + b_i^{L-1}  \right)
\end{equation}

We construct a neural network $N$ in the following way: For each $i=1,\dots,n$ we add to the network $N_i$ two extra coordinates, and for each matrix in the network $N_i$ we concatenate it with the block matrix $I_2$. In other words, we extend each $N_i$ to have two more inputs and outputs coordinates, and just apply the identity on those two coordinates. This way, the output of each network $N_i$ on the last two coordinates are just the ReLU of the input. Now, for each $N_i$ we construct a depth-2, width-$2(d+L-1)$ network $M_i$ which maps the following input to output, where we assume in the following equations that the $y_j$'s are non-negative. If $w_i^L \geq0$ then:
\[
\begin{pmatrix}
\sigma(x_1) \\ \vdots \\ \sigma(x_d) \\ \sigma(-x_1) \\ \vdots \\ \sigma(-x_d) \\ y_1 \\ \vdots \\y_{2L-5} \\ y_{2L-4} \\ y_{2L-3} \\ y_{2L-2}
\end{pmatrix} \mapsto 
\begin{pmatrix}
\sigma(x_1) \\ \vdots \\ \sigma(x_d) \\ \sigma(-x_1) \\ \vdots \\ \sigma(-x_d) \\ 0 \\ \vdots \\0 \\ 0 \\ y_{2L-3} + w_i^L\sigma\left(y_{2L-5} - y_{2L-4} + b_i^{L-1}\right) \\ y_{2L-2}
\end{pmatrix}
\]
and if $w_i^L <0$ then:
\[
\begin{pmatrix}
\sigma(x_1) \\ \vdots \\ \sigma(x_d) \\ \sigma(-x_1) \\ \vdots \\ \sigma(-x_d) \\ y_1 \\ \vdots \\y_{2L-5} \\ y_{2L-4} \\ y_{2L-3} \\ y_{2L-2}
\end{pmatrix} \mapsto 
\begin{pmatrix}
\sigma(x_1) \\ \vdots \\ \sigma(x_d) \\ \sigma(-x_1) \\ \vdots \\ \sigma(-x_d) \\ 0 \\ \vdots \\0 \\ 0 \\ y_{2L-3} \\ y_{2L-2}  + |w_i^L|\sigma\left(y_{2L-5} - y_{2L-4} + b_i^{L-1}\right)
\end{pmatrix}~.
\]
We can define $M_i$ this way, since the output of the ReLU function is positive, and simulating the identity on positive inputs using ReLU requires a single neuron ($\sigma(z) = z$ for all $z\geq 0$).

Finally we define the network as:
\[
N(\bx) =  M_{n}\circ N_n\cdots M_1\circ N_1(\bx)~.
\]
First, note that the construction is valid, since for every $M_i$, its output on the first $2d$ coordinates satisfies the induction hypothesis, and also the output of the next 2L-4 coordinates is zero. Second, by the construction and \eqref{eq:second to last layer} and \eqref{eq:def of z i(x)} it is easy to see that for every $\bx$ we have that:
\[
\Ncal^*(\bx) = (N\circ N_0^{L}(\bx))_{2d+2L-3} - (N\circ N_0^{L}(\bx))_{2d+2L-2} + b^{L}  
\]
The depth of $N$, by the induction hypothesis, can be bounded by $2n\cdot (2n)^{L-2} = (2n)^{L-1}$. The width of $N$ by the construction is bounded by $2d+2L-2$. This finishes the induction proof.

We now define $N_{\text{out}}:\reals^{2(d+L-1)}\rightarrow\reals$ as the affine operator which maps 
\[
\begin{pmatrix}
y_1 \\ \vdots \\ y_{2(d+L-1)}
\end{pmatrix}\mapsto y_{2d+2L-3} - y_{2d+2L-2} + b^L~.
\]
We finally define the network $\Ncal:\reals^d\rightarrow\reals$ as 
\[
\Ncal:= N_{\text{out}}\circ N \circ N_0^L
\]
where $N$ is given by the induction step for $L$. We have shown that for any $\bx\in\reals^d$ we have that $\Ncal(\bx) = \Ncal^*(\bx)$.

The depth of the entire network is equal to the depth of the network $N$ plus the input and output subnetworks, which by the induction hypothesis is at most $(2n)^{L-1} + 2$, while the width of $\Ncal$ is $2d+2L-2$.
\end{proof}

\end{document}